\documentclass{article}

% if you need to pass options to natbib, use, e.g.:
%     \PassOptionsToPackage{numbers, compress}{natbib}
% before loading neurips_2024

\PassOptionsToPackage{numbers}{natbib}
% ready for submission
\usepackage[final]{neurips_2024}

% to compile a preprint version, e.g., for submission to arXiv, add add the
% [preprint] option:
%     \usepackage[preprint]{neurips_2024}

% to compile a camera-ready version, add the [final] option, e.g.:
%     \usepackage[final]{neurips_2024}

% to avoid loading the natbib package, add option nonatbib:
%\usepackage[nonatbib]{neurips_2024}

\usepackage[utf8]{inputenc} % allow utf-8 input
\usepackage[T1]{fontenc}    % use 8-bit T1 fonts
\usepackage{hyperref}       % hyperlinks
\usepackage{url}            % simple URL typesetting
\usepackage{booktabs}       % professional-quality tables
\usepackage{amsfonts}       % blackboard math symbols
\usepackage{nicefrac}       % compact symbols for 1/2, etc.
\usepackage{microtype}      % microtypography
\usepackage{xcolor}         % colors

%%% Our packages 

\usepackage{amsmath}
\usepackage{amssymb}
\usepackage{mathtools}
\usepackage{amsthm}
\usepackage{bbm}
\usepackage{algpseudocodex}
\usepackage{algorithm}

\usepackage{subcaption}

\usepackage{xspace}
\usepackage[pro]{fontawesome5}
\usepackage{tikz}
\usetikzlibrary{arrows, automata, positioning}
\usepackage{wrapfig}
\usepackage{dashrule}

\definecolor{light-gray}{gray}{0.95}

\definecolor{gold}{RGB}{255,215,0}
\definecolor{goldenrod}{RGB}{218,165,32}
\definecolor{maroon}{rgb}{0.5, 0.0, 0.0}

\usepackage{pifont}% http://ctan.org/pkg/pifont
\newcommand{\cmark}{\ding{51}}%
\newcommand{\xmark}{\ding{55}}%

%%% Theorems
\newtheoremstyle{mytheoremstyle} % name
    {\topsep}                    % Space above
    {\topsep}                    % Space below
    {}                   % Body font
    {}                           % Indent amount
    {\bfseries}                   % Theorem head font
    {}                          % Punctuation after theorem head
    {.5em}                       % Space after theorem head
    {}  % Theorem head spec (can be left empty, meaning ‘normal’)

\theoremstyle{mytheoremstyle}

\newtheorem{theorem}{Theorem}[section]
\newtheorem{proposition}[theorem]{Proposition}
\newtheorem{lemma}[theorem]{Lemma}

\newtheorem{definition}[theorem]{Definition}

\newtheorem{method}{Method}

\newtheorem{example}[theorem]{Example}

%% to repeat theorems in appendix
\usepackage{thmtools}
\usepackage{thm-restate}

\title{%Reward Machines for Deep RL: Formal Language Reward Specifications in Noisy and Uncertain Environments
Reward Machines for Deep RL \\in Noisy and Uncertain Environments
%Reward Machines for Deep RL in Noisy and Uncertain Environments
}

% The \author macro works with any number of authors. There are two commands
% used to separate the names and addresses of multiple authors: \And and \AND.
%
% Using \And between authors leaves it to LaTeX to determine where to break the
% lines. Using \AND forces a line break at that point. So, if LaTeX puts 3 of 4
% authors names on the first line, and the last on the second line, try using
% \AND instead of \And before the third author name.

\author{%
  Andrew C. Li\\
  University of Toronto \\
  Vector Institute \\
  \And
  Zizhao Chen\thanks{Work done while at the University of Toronto.} \\
  Cornell University \\
  \And
  Toryn Q. Klassen\thanks{Also affiliated with the Schwartz Reisman Institute for Technology and Society.} \\ 
  University of Toronto \\
  Vector Institute \\
  \AND
  Pashootan Vaezipoor \\
  Georgian.io \\
  Vector Institute \\
  \And
  Rodrigo Toro Icarte \\
  Pontificia Universidad Católica de Chile \\
  Centro Nacional de Inteligencia Artificial \\
  \And
  Sheila A. McIlraith\textsuperscript{$\dagger$} \\
  University of Toronto \\
  Vector Institute \\
}

%%%%%

% \newcommand{\review}[1]{#1}

% Turn off alt
% \newcommand{\alt}[1]{}
\newcommand{\althide}[1]{}
\newcommand{\addback}[1]{}

\newcommand{\removehide}[1]{}

%%%%%%%%%%%%%   COMMENTS  %%%%%%%%%%%%%%

%%%%%%%%  TURN COMMENTS ON AND OFF WITH \commentstrue AND \commentsfalse BELOW %%%%%%%%%%%%%%%%

\newif\ifcomments
% Set the following variable to change between clean version and commented one.
\commentstrue
%\commentsfalse

% \newcommand{\plagia}[1]{\textcolor{red}{\sout{#1}}}
\ifcomments
%%%%%%%%%%%%%%%%%%%

\newcommand{\commentsm}[1]{\textcolor{purple}{({\bf SM:} #1)}}
\newcommand{\commentsmhide}[1]{}

\newcommand{\commenttk}[1]{\textcolor{teal}{({\bf TK:} #1)}}
\newcommand{\commenttkhide}[1]{}

\newcommand{\commental}[1]{\textcolor{orange}{({\bf AL:} #1)}}
\newcommand{\commentalhide}[1]{}

\newcommand{\commentzc}[1]{\textcolor{gray}{({\bf ZC:} #1)}}
\newcommand{\commentzchide}[1]{}

\newcommand{\commentpv}[1]{\textcolor{maroon}{({\bf PV:} #1)}}
\newcommand{\commentpvhide}[1]{}

\else

\newcommand{\commentsm}[1]{}
\newcommand{\commentsmhide}[1]{}

\newcommand{\commenttk}[1]{}
\newcommand{\commenttkhide}[1]{}

\newcommand{\commental}[1]{}
\newcommand{\commentalhide}[1]{}

\newcommand{\commentzc}[1]{}
\newcommand{\commentzchide}[1]{}

\newcommand{\commentpv}[1]{}
\newcommand{\commentpvhide}[1]{}
\fi

%% Math convenience

% Pebble
\def\stone(#1)[#2][#3][#4]{
    \begin{scope}[shift={(#1)},rotate=#2,scale=#3, transform shape]
        \draw[myshade,opacity=#4] (15:3pt) -- (45:4pt) -- (89:5pt)--(135:6pt) -- (150:3pt) -- (182:4pt)--(235:6pt) -- (280:3pt) -- (330:2pt) -- cycle;
    \end{scope}
}

% Mine == Collection of Pebbles
\def\mine(#1)[#2]{
    \begin{scope}[shift={(#1)},transform shape]
        \stone(7pt,8pt)[55][1.1][#2]
        \stone(0pt,0pt)[45][0.7][#2]
        \stone(14pt,14pt)[60][0.7][#2]
        \stone(0pt,15pt)[110][0.5][#2]
        \stone(15pt,2pt)[-20][0.7][#2]
        \stone(7.5pt,0pt)[90][0.5][#2]
    \end{scope}
}

\def\stonetwo(#1)[#2][#3][#4]{
    \begin{scope}[shift={(#1)},rotate=#2,scale=#3, transform shape]
        \draw[myshade2,opacity=#4] (15:3pt) -- (45:4pt) -- (89:5pt)--(135:6pt) -- (150:3pt) -- (182:4pt)--(235:6pt) -- (280:3pt) -- (330:2pt) -- cycle;
    \end{scope}
}

% Mine == Collection of Pebbles
\def\minetwo(#1)[#2]{
    \begin{scope}[shift={(#1)},transform shape]
        \stonetwo(7pt,8pt)[55][1.1][#2]
        \stonetwo(0pt,0pt)[45][0.7][#2]
        \stonetwo(14pt,14pt)[60][0.7][#2]
        \stonetwo(0pt,15pt)[110][0.5][#2]
        \stonetwo(15pt,2pt)[-20][0.7][#2]
        \stonetwo(7.5pt,0pt)[90][0.5][#2]
    \end{scope}
}

% Cartoons of propositions to be used inline in text
\newcommand{\depot}{\text{\faHome}\xspace}
\newcommand{\gold}{
    \begin{tikzpicture}[
        myshade/.style={
            goldenrod!50,
            draw,
            ultra thin,
            rounded corners=.5mm,
            top color=gold!20,
            bottom color=gold!20!goldenrod
        },
        interface/.style={
            postaction={draw,decorate,decoration={border,angle=-135,
            amplitude=0.3cm,segment length=2mm}}
            }
        ]
        \stone(0, 0)[80][0.9][1]
    \end{tikzpicture}
    \xspace
}

\begin{document}

\maketitle
\renewcommand{\thefootnote}{}
\footnotetext{Correspondence to \texttt{andrewli@cs.toronto.edu}.}
\renewcommand{\thefootnote}{\arabic{footnote}}
\setcounter{footnote}{0} 

\begin{abstract}
Reward Machines provide an automaton-inspired structure for specifying instructions, safety constraints, and other temporally extended reward-worthy behaviour. By exposing the underlying structure of a reward function, they enable the decomposition of an RL task, leading to impressive gains in sample efficiency.
Although Reward Machines and similar formal specifications have a rich history of application towards sequential decision-making problems, 
%\review{they critically rely on a \emph{ground-truth} interpretation of the domain-specific vocabulary that forms the building blocks of the reward function---such ground-truth interpretations are elusive in the real world due in part to partial observability and noisy sensing.}
prior frameworks have traditionally ignored ambiguity and uncertainty when interpreting the domain-specific vocabulary forming the building blocks of the reward function. Such uncertainty critically arises in many real-world settings due to factors like partial observability or noisy sensors. In this work, we explore the use of Reward Machines for Deep RL in noisy and uncertain environments. We characterize this problem as a POMDP and propose a suite of RL algorithms that exploit task structure under uncertain interpretation of the domain-specific vocabulary. Through theory and experiments, we expose pitfalls in naive approaches to this problem while simultaneously demonstrating how task structure can be successfully leveraged under noisy interpretations of the vocabulary.

% \commentsm{Warning: I just heard Leslie Kaelbling complain vociferously about the use of the word "task" in RL, Sutton agreed and suggested "skill".  His justification (my paraphrase): task is the problem, skill is the solution.  I know we use task throughout. Let's just make sure we define it clearly as encompassing all the different things we list above.}
% \commental{I'm not sure we have time to replace "task" since we use it quite a lot. I'm also not sure I understand their issue with the word without further context. I think people will know what we mean though.}
% \commenttk{Yes, let's just leave that as it is.}

\medskip
\textbf{Code and videos are available at \url{https://github.com/andrewli77/reward-machines-noisy-environments}.}
\end{abstract}

\section{Introduction}
\label{section:intro}

% \commentsm{MY VOTE -- All the blue text in the abstract and Section 1 can be made black IMO. I added the red text. Please scrutinize.}

% \commentsm{I added "non-Markovian" to the intro since we talk about it in subsequent sections assuming people know what it is}

% \commentsm{And remove all my comments once you've each read them.}

Formal languages, including programming languages such as Python and C, have long been used for objective specification in sequential decision making.  
Using a vocabulary of domain-specific properties, expressed as propositional variables, formal languages like Linear Temporal Logic (LTL) \citep{pnueli1977temporal} capture
complex temporal patterns---such as the objectives of an agent---by composing variables via temporal operators and logical connectives. These languages provide well-defined semantics while enabling semantics-preserving transformations to normal-form representations such as automata, which can expose the discrete structure underlying an objective to a decision-making agent. One such representation is the increasingly popular Reward Machine, which combines expression of rich temporally extended (non-Markovian) objectives via automata with algorithmic techniques such as automated reward shaping, task decomposition, and counterfactual learning updates to garner significant improvements in sample efficiency (e.g., \cite{icarte2018using,icarte2022reward,furelos2023hierarchies}). Importantly, Reward Machines can be specified directly, constructed via translation from any regular language (including variants of LTL), synthesized from high-level planning specifications, or learned from data~\cite{camamacho2019ijcai}.
%\remove{More generally, formal languages open the door to correct-by-construction policy synthesis and verification in safety-critical domains.} 
For these reasons, formal languages, and in particular, Reward Machines, have been adopted across a diversity of domains, ranging from motion planning \citep{kress2009temporal, fainekos2009temporal, sun2024neurosymbolic} and robotic manipulation \citep{camacho2021reward,he2015towards, kunze2011logic} to, more recently, general deep Reinforcement Learning (RL) problems %over large inputs 
\citep[e.g.,][]{aksaray2016q,li2017reinforcement,li2018policy, littman2017environment,icarte2018using,tor-etal-neurips19, hasanbeig2018logically,deeplcrl,alur2019,DBLP:conf/ijcai/0005T19,kuo2020encoding,HasanbeigKA20Deep,xu2020joint,xu2020active,furelos2020inductionAAAI,rens2020learning,DBLP:conf/kr/GiacomoFIPR20,jiang2020temporal, neary2020reward, shah2020planning, velasquez2021learning,defazio2021learning,zheng2021lifelong,camacho2021reward,corazza2022reinforcement,liu2022skill, furelos2023hierarchies, shah2024deep, zhang2023exploiting, vaezipoor2021ltl2action, DBLP:journals/ai/IcarteKVCWM23}.

%\review{Critically, the use of formal languages in deep RL is often predicated on knowledge of a ``ground-truth'' interpretation of the symbolic vocabulary. This is realized via a \emph{labelling function}, an oracle mapping from environment states to the truth or falsity of abstract properties that form the building blocks of objective specifications.}
Importantly, formal language frameworks for deep RL require an interpretation of the domain-specific vocabulary grounded in the RL environment. This is captured by a \emph{labelling function}, a mapping from environment states to the truth or falsity of abstract propostions that constitute the building blocks of objective specifications.
%These mappings are typically programmed by an expert, but can be hard to obtain in practice when dealing with high-dimensional inputs such as images. There exists a myriad of real-world settings where such an ``oracle'' is impractical and the agent must instead rely on noisy or uncertain evaluations of the vocabulary---such noise may stem from noisy sensors, partial observability, the use of neural classifiers trained from data, and so on. Under these adverse conditions, naively applying RL algorithms intended for FLs with a perfect labelling function may result in a failure to satisfy objectives or the violation of safety-critical constraints, motivating the problem we study in this work. We present the following contributions. 
However, practical real-world environments are often partially observable and rely on high-dimensional sensor data such as images. As a result, labelling functions %\remove{may rely on neural classifiers and that} 
are, by necessity, noisy and uncertain, compromising the application of %the myriad of techniques that employ 
formal languages such as Reward Machines or LTL for objective specification.
Consider %as a motivating example 
an autonomous vehicle, whose desired behaviour %\addedsm{(good or bad)} 
at an intersection can be formally specified using temporal logic \cite{bouton2019reinforcement, maierhofer2022formalization}. In the real world, key determinations---whether a pedestrian is crossing, the colour of the light, the intent of other vehicles, and so on---must be made based on noisy or obstructed LiDAR and camera sensors, and may therefore be noisy or uncertain themselves. To address this problem, while benefiting from the advantages of Reward Machines, we make the following contributions.

\textbf{(1)} We propose a deep RL framework for Reward Machines in settings where the evaluation of domain-specific vocabulary is uncertain, characterizing the RL problem in terms of a POMDP. To our knowledge, this is the first deep RL framework for Reward Machines that broadly supports the imperfect detection of propositions, allowing us to extend Reward Machines to general partially observable environments. 

\textbf{(2)} We propose and analyze a suite of RL algorithms that exploit Reward Machine structure 
%\review{that can be deployed without a ground-truth labelling function.} 
under noisy and uncertain interpretations of the vocabulary.
We show how preexisting \textit{abstraction models}---noisy estimators of abstract, task-relevant features that may manifest as pretrained neural networks, sensors, heuristics, or otherwise---can be brought to bear to improve learning efficiency. %We use these algorithms to train agents that are aware of, and are robust to their uncertainty over the true evaluations of the vocabulary when deployed. 

\textbf{(3)} We theoretically and experimentally evaluate our proposed RL algorithms.
Theoretically, we discover a pitfall of naively leveraging standard abstraction models---namely, that errors from repeated queries of a model are correlated rather than i.i.d. We show that this can have serious ramifications, including unintended or dangerous outcomes, and demonstrate how this issue can be mitigated.
Experimentally, we consider a variety of challenging domains involving partial observability and high-dimensional observations. Results show that our algorithms successfully leverage task structure to improve sample efficiency and total reward under uncertain interpretations of the vocabulary. 

% \commentsm{(1) Can the word "performance" be replaced with a more precise description? (2) Per my msg in slack, can we agree on consistent use of noisy vs uncertain (or use both).  This is a decision that requires some consideration so let's agree before we edit.}
% \commental{I replaced "performance"}

%\textbf{(3)} \textbf{We theoretically analyze these algorithms and their modelling assumptions.} Soundness conditions are identified for each algorithm. We discover a pitfall of naively leveraging noisy models---namely, that errors from repeated queries of a model are correlated rather than i.i.d. We show that this can have serious ramifications, including unintended or dangerous outcomes, and demonstrate how this issue can be mitigated.

%\textbf{(4)} \textbf{We conduct experiments that verify our findings for noisy abstraction models trained from data.} We consider a variety of challenging domains, including domains with partial observability and high-dimensional observations. Results show that our algorithms successfully leverage task structure to improve performance while only having access to noisy interpretations of the vocabulary. 
\section{Background}
\label{section:background}

{\bf Notation.~} Given a set of random variables, $X$, $\Delta X$ is the set of distributions over $X$; $\tilde{(\cdot)}$ denotes a particular distribution; and for a categorical distribution $\tilde{w} \in \Delta X$ and some $x \in X$, we denote $\tilde{w}[x]$ as the probability of $x$ under $\tilde{w}$. We use $x_{i:j}$ as a shorthand for the sequence $x_i, \ldots, x_j$. 

%\commentsm{Aesthetics:  Should the paragraph headers have a colon  (or if not a period) after them? A colon would be consistent with what I'm proposing for the first paragraph  E.g., POMDPs:  Reward Machines: This would make them consistent with "Notation" as I propose}
%\commental{I replaced with periods to match the existing paragraph headers in the appendix. }

{\bf POMDPs.~} 
A \emph{Partially Observable Markov Decision Process} ({POMDP}) $\langle S, O, A, P, R, \omega, \mu  \rangle$  \citep{aastrom1965optimal} is defined by the state space $S$, observation space $O$, action space $A$, reward function $R$, transition distribution $P: S \times A \to \Delta S$, observation distribution $\omega: S \to \Delta O$, and initial state distribution $\mu \in \Delta S$. An \emph{episode} begins with an initial state $s_1 \sim \mu(\cdot)$ and at each timestep $t \ge 1$, the agent observes an observation $o_t \sim \omega(s_t)$, performs an action $a_t \in A$, transitions to the next state $s_{t+1} \sim P(s_t, a_t)$, and receives reward $r_t = R(s_t, a_t, s_{t+1})$. 
%\commenttk{In the proof of Theorem 4.1, the reward also depends on $s_{t+1}$.}
Denote the agent's observation-action history at time $t$ by $h_t = (o_1, a_1, \ldots, o_{t-1}, a_{t-1}, o_t)$ and the set of all possible histories as $H$. A fully observable \emph{Markov Decision Process} ({MDP}) serves as an important special case where the observation at each time $t$ is $o_t = s_t$.
%the state $s_t$ itself}.  \addback{of a POMDP.}

{\bf Reward Machines.~} 
A \emph{Reward Machine} {(RM)} \citep{icarte2018using} is a formal automaton representation of a non-Markovian reward function that captures temporally extended behaviours. Formally, an RM $\mathcal{R}=\langle U, u_1, F, \mathcal{AP}, \delta_u, \delta_r \rangle$, where $U$ is a finite set of states, $u_1\in U$ is the initial state, $F$ is a finite set of terminal states (disjoint from $U$), $\mathcal{AP}$ is a finite set of atomic propositions representing the occurrence of salient events in the environment, $\delta_u : U \times 2^\mathcal{AP} \rightarrow (U \cup F)$ is the state-transition function, and $\delta_r : U \times 2^\mathcal{AP} \rightarrow \mathbb{R}$ is the state-reward function. Each transition in an RM is labelled with a scalar reward along with a propositional logic formula over $\mathcal{AP}$,
%\remove{ representing the occurrence of salient events in the environment,}
while accepting states represent task termination. 
%See Figure~\ref{fig:gold_mining} for an example of an RM. 

{\bf Labelling Function.~} 
While an RM captures the high-level structure of a non-Markovian reward function, concrete rewards in an environment are determined with the help of a \textit{labelling function} $\mathcal{L}: S \times A \times S \to 2^\mathcal{AP}$, a mapping that abstracts state transitions ($s_{t-1}, a_{t-1}, s_t$) in the environment to the subset of propositions that hold for that transition. To obtain rewards, the sequence of environment states and actions $s_1, a_1, \ldots, s_{t}, a_{t}, s_{t+1}$ are labelled with propositional evaluations $\sigma_{1:t}$, where $\sigma_i = \mathcal{L}(s_i, a_i, s_{i+1}) \in 2^\mathcal{AP}$. A sequence of transitions in the RM are then followed based on $\sigma_{1:t}$ to produce the reward sequence $r_{1:t}$. In an MDP environment, rewards specified by an RM are {Markovian} over an extended state space $S \times U$; hence, there is an optimal policy of the form $\pi(a_t | s_t, u_t)$ where $u_t \in U$ is the RM state at time $t$ \cite{icarte2018using}. During execution, $u_t$ can be recursively updated given $u_{t-1}$ by querying $\mathcal{L}$ after each environment transition. 

\section{Problem Framework}
\label{sec:framework}

\definecolor{bluetext}{HTML}{6C8EBF}
\begin{wrapfigure}{r}{0.5\textwidth}
\centering
\vspace{-3em}
\includegraphics[width=0.5\textwidth]{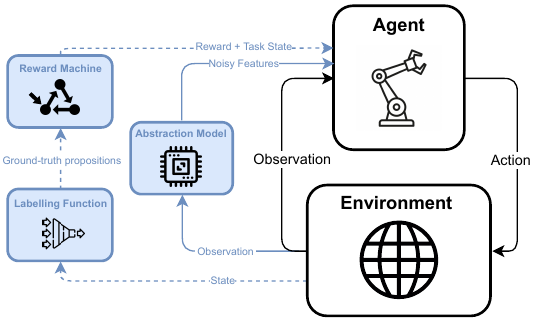}
\caption{The \emph{Noisy Reward Machine Environment} framework. {\color{bluetext} {Blue}} elements highlight differences with respect to a standard RL framework. Dashed lines ({\color{bluetext}\hdashrule[0.5ex]{1cm}{1pt}{1mm}}) indicate that an element is required during training but not deployment.}
\label{fig:framework}
\end{wrapfigure}

% \begin{figure}[t]
% \centering
% \vspace{-3em}
% \includegraphics[width=0.7\textwidth]{figs/framework3.pdf}
% \caption{The \emph{Noisy Reward Machine Environment} framework. {\color{bluetext} {Blue}} elements highlight differences with respect to a standard RL framework. Dashed lines ({\color{bluetext}\hdashrule[0.5ex]{1cm}{1pt}{1mm}}) indicate that an element is required during training but not deployment.}
% \label{fig:framework}
% \end{figure}

\subsection{Formalization}

We formalize the problem of solving an RM task under an uncertain interpretation of the vocabulary. Consider an agent acting in a POMDP environment (without the reward function) $\mathcal{E} = \langle S, O, A, P, \omega, \mu  \rangle$. We define rewards $r_t$ based on an RM $\mathcal{R}=\langle U, u_1, F, \mathcal{AP}, \delta_u, \delta_r \rangle$ interpreted under a ground-truth labelling function $\mathcal{L}: S \times A \times S \to 2^\mathcal{AP}$ as described in Section~\ref{section:background}.

An important aspect of our framework is that $\mathcal{L}$ is not made accessible to the agent. Instead, we make the weaker assumption that the agent can query an \emph{abstraction model} $\mathcal{M}: H \to Z$. Here, $\mathcal{M}$ captures the agent's preexisting knowledge over how a set of high-level features $Z$ are grounded within the environment. Abstraction models are easier to obtain than ground-truth labelling functions for several reasons: they take as input observable histories $H$ rather than states $S$, they can map to any feature space $Z$, and crucially, we allow their outcomes to be incorrect or uncertain. Note that the definition of abstraction models is quite general---in the real world, they might manifest as pretrained foundation models \citep{radford2021learning, lifshitz2023steve, baker2022video, du2023vision}, sensors \citep{finucane2010ltlmop}, task-specific classifiers \citep{goyal2019using}, or so on.

We refer to the tuple $\mathcal{P} = \langle \mathcal{E}, \mathcal{R}, \mathcal{L}, \mathcal{M} \rangle$ as a \textit{Noisy Reward Machine Environment} (depicted in Figure~\ref{fig:framework}). Given $\mathcal{P}$, our goal is to obtain a policy $\pi(a_t | h_t, z_{1:t})$ based on observations and outputs from the abstraction model up to time $t$ that maximizes the expected discounted return $\mathbb{E}_\pi [ \sum_{t=0}^\infty {\gamma^t} r_t ]$ for some discount factor $\gamma \in (0, 1]$. In this work, we assume $\pi$ is trained via RL, and we assume that the ground-truth rewards $r_i$ are observable during training only. Notably, the trained policy $\pi$ can be deployed without access to the ground-truth labelling function $\mathcal{L}$.

% \commentsm{Another picky quibble.  I believe we used to use $u_0$ as the initial state of the RM,  However, in the intro of RMs above, we call the initial state $u_1$. I'm reluctant to change anything without carefully checking for references inline, but did notice this.}
% \commentsm{Thanks for your comment about my additional observation about Fig 2, Toryn. (Removed) You're absolutely correct. Haste on my part }

% \commentsm{all approved}

\subsection{Running Example}
\label{gold_mining_example}

\begin{figure}[t]
    \centering
    \input{figs/mining-errors}
    \hspace{1em}
    \begin{tikzpicture}[
    ->, 
    % makes the edges directed
    >=stealth', 
    % makes the arrowheads bold
    node distance=3cm, 
    %specifies the minimum distance between two nodes. Change if necessary.
    every state/.style={thick, fill=gray!10}, 
    % sets the properties for each ’state’ node
    initial text=$ $,
    % % sets the text that appears on the start arrow
    every node/.style={scale=1},
    scale=1.2,
    ultra thick
    ]
    \node[state, initial] (u0) {$u^{(0)}$}; %u^{(0)}$};
    \node[state, right=2.5cm of u0] (u1) {$u^{(1)}$}; % $u^{(1)}$};
    \node[state, right=2.5cm of u1, accepting] (u2) {$u^{(2)}$}; %$u^{(2)}$};
    \draw (u0) edge[loop below] node{$\langle \mathrm{o/w}, 0 \rangle$} (u0);
    \draw (u0) edge[above] node{$\langle \gold \land \lnot \depot, 0 \rangle$} (u1);
    
    \draw [->,>=stealth,out=30,in=150,looseness=0.6] (u0.north) to node[above]{$\langle \depot, 0 \rangle$} (u2.north);
    
% \draw[ultra thick,->, >=stealth, out=10, in=170, looseness=0.6] (q_0) to node [above] {\textbf{\small H2 or H3}} (q_3);
    
    \draw (u1) edge[loop below] node{$\langle \mathrm{o/w}, 0 \rangle$} (u1);
    \draw (u1) edge[above] node{ $\langle \depot,  1 \rangle$} (u2);
    
\end{tikzpicture}
    \caption{
    The \textit{Gold Mining Problem} is a Noisy RM Environment where the agent's interpretation of the vocabulary is uncertain. \textbf{Left:} The four rightmost cells yield gold ({\protect\gold}) while two cells in the second column yield iron pyrite, which has no value. The agent cannot reliably distinguish between the two metals---cells are labelled with the probability the agent \textit{believes} it yields gold. \textbf{Right:} The RM emits a (non-Markovian) reward of 1 for collecting gold and delivering it to the depot (\depot).}
    \label{fig:gold_mining}
\end{figure}
The \textit{Gold Mining Problem} (Figure~\ref{fig:gold_mining}) serves as a running example of a Noisy RM Environment. A mining robot operates in a grid with a non-Markovian goal: dig up at least one chunk of gold (\gold) and deliver it to the depot (\depot). The environment is an MDP where the robot observes its current grid position and its actions include moving in the cardinal directions and digging. A labelling function $\mathcal{L}$ associates the propositions $\mathcal{AP} = \{ \text{\gold, \depot} \}$ with grid states and actions as follows: \gold holds when the robot digs in the rightmost row, and \depot holds when the robot is at the bottom-left cell.

However, the robot does not have access to $\mathcal{L}$ and cannot reliably distinguish gold from iron pyrite. Thus, it cannot ascertain whether it has obtained gold during the course of an episode. Luckily, the robot can make an educated guess as to whether a cell contains gold, which is captured by an abstraction model $\mathcal{M}: H \to [0,1]$ mapping the robot's current position (while ignoring the rest of the history) to its belief that the cell contains gold.

Note that if the agent \textit{could} observe $\mathcal{L}$, then it could learn an optimal {Markovian} policy $\pi(a_t | s_t, u_t)$ with a relatively simple form ($u_t$ is easily computed with access to $\mathcal{L}$). Intuitively, such a policy should collect gold while in RM state $u^{(0)}$ and head to the depot while in RM state $u^{(1)}$. Unfortunately, when the agent does not have access to $\mathcal{L}$, we cannot directly learn a policy with the simpler Markovian form above. In the following sections, we show how the agent's noisy belief captured by the abstraction model $\mathcal{M}$ can be leveraged to simplify the learning problem.

\section{Noisy RM Environments as POMDPs}
\label{sec:theory}

We start with an analysis of the Noisy RM Environment framework, contrasting it with a standard RM framework. We ask: \textbf{(1)} What is the optimal behaviour in a Noisy RM Environment? \textbf{(2)} How does the abstraction model $\mathcal{M}$ affect the problem? \textbf{(3)} How does not observing the ground-truth labelling function $\mathcal{L}$ affect the problem? We provide proofs for all theorems in Appendix~\ref{sec:additional-proofs}. 

Observe that uncertainty in the propositional values is only relevant insofar as it influences the agent's belief about the current RM state $u_t$ since rewards from an RM $\mathcal{R}$ are Markovian over extended states $(s_t, u_t) \in S \times U$. Our first result is that a Noisy RM Environment $\langle \mathcal{E}, \mathcal{R}, \mathcal{L}, \mathcal{M} \rangle$ can be reformulated into an equivalent POMDP with state space $S \times U$ and observation space $O$ (Theorem~\ref{theorem1}). Here, we say two problems are equivalent if there is a bijection between policies for either problem such that the policies have equal expected discounted return and behave identically given the same history $h_t$. Thus, optimal behaviour in a Noisy RM Environment can be reduced to solving a POMDP. 
\begin{restatable}[]{theorem}{theoremonetext}
%\begin{theorem}
\label{theorem1}
A Noisy RM Environment $\langle \mathcal{E}, \mathcal{R}, \mathcal{L}, \mathcal{M} \rangle$ is equivalent to a POMDP over state space $S \times U$ and observation space $O$. %(All proofs can be found in the Appendix.)
%\end{theorem}
\end{restatable}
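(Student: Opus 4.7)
The plan is to exhibit a concrete POMDP $\mathcal{E}' = \langle S \times U, O, A, P', R', \omega', \mu' \rangle$ whose dynamics mirror those of the Noisy RM Environment $\mathcal{P}$, and then to write down an explicit bijection between policies in $\mathcal{E}'$ and policies in $\mathcal{P}$, checking that corresponding policies induce the same distribution over observation-action-reward sequences and hence the same expected discounted return.

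First I would define the components of $\mathcal{E}'$ so that they track both the environment state and the current RM state. Let $\mu'(s, u) = \mu(s)$ if $u = u_1$ and $0$ otherwise; let $P'((s, u), a)$ sample $s' \sim P(s, a)$ and deterministically set $u' = \delta_u(u, \mathcal{L}(s, a, s'))$; let the observation kernel $\omega'((s, u)) = \omega(s)$ hide the RM state; and let the reward be $R'((s, u), a, (s', u')) = \delta_r(u, \mathcal{L}(s, a, s'))$. Transitions into $F$ end the episode; equivalently, the state space can be enlarged to $S \times (U \cup F)$ with $F$ absorbing. By construction, the history an agent sees in $\mathcal{E}'$ has the same type as in $\mathcal{P}$, namely $h_t \in H$.

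Next I would construct the policy bijection. Because $\mathcal{M} : H \to Z$ is a deterministic function, the sequence $z_{1:t}$ is fully determined by $h_t$. Hence any policy $\pi(a_t \mid h_t, z_{1:t})$ for $\mathcal{P}$ re-expresses as $\pi'(a_t \mid h_t) := \pi(a_t \mid h_t, \mathcal{M}(h_1), \ldots, \mathcal{M}(h_t))$ for $\mathcal{E}'$, and any $\pi'(a_t \mid h_t)$ lifts to a policy in $\mathcal{P}$ that simply ignores $z_{1:t}$. These two maps are mutually inverse and, by construction, corresponding policies select identical action distributions at every history. The equal-returns claim then follows by induction on $t$: the initial state marginals, the transition-and-observation rules, and the reward rules of $\mathcal{E}'$ match, term by term, the data-generating process that defines rewards in $\mathcal{P}$ via $\sigma_t = \mathcal{L}(s_t, a_t, s_{t+1})$, $r_t = \delta_r(u_t, \sigma_t)$, and $u_{t+1} = \delta_u(u_t, \sigma_t)$.

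The main obstacle is essentially bookkeeping rather than conceptual. The RM reward $\delta_r(u_t, \mathcal{L}(s_t, a_t, s_{t+1}))$ depends on the \emph{next} environment state, so one must either allow a transition-dependent reward in the POMDP formalism or marginalize to obtain $R((s, u), a) = \mathbb{E}_{s' \sim P(s, a)}[\delta_r(u, \mathcal{L}(s, a, s'))]$; the latter preserves expected return and therefore suffices for the equivalence. A secondary subtlety is that $\delta_u$ ranges over $U \cup F$, so one must handle termination carefully via an absorbing sink. Once these details are dispatched, the theorem reduces to matching transition kernels and conditional reward distributions at each time step, which holds by the very definition of $\mathcal{E}'$.
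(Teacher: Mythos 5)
Your proposal is correct and takes essentially the same route as the paper's proof: you construct the identical product POMDP over $S \times U$ (with the same $\mu'$, $P'$, $\omega'$, and $R'$) and establish the policy correspondence via the same key observation that $z_{1:t} = \mathcal{M}(h_1), \ldots, \mathcal{M}(h_t)$ is a deterministic function of $h_t$. Your added bookkeeping---marginalizing the transition-dependent reward to $R'((s,u),a) = \mathbb{E}_{s' \sim P(s,a)}[\delta_r(u, \mathcal{L}(s,a,s'))]$ to fit the stated POMDP reward signature, and handling $F$ via an absorbing sink---only tightens details the paper's proof leaves implicit.
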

One may notice that the abstraction model $\mathcal{M}$ doesn't appear in the POMDP reformulation at all. We later show that an appropriate choice of $\mathcal{M}$ can improve policy learning in practice, but this choice ultimately does not change the optimal behaviour of the agent (Theorem~\ref{theorem2}). 

%\begin{theorem}
\begin{restatable}[\emph{Does the choice of $\mathcal{M}$ affect optimal behaviour?}]{theorem}{theoremtwotext}
\label{theorem2}
Let $\mathcal{P}$ be a Noisy RM Environment $\langle \mathcal{E}, \mathcal{R}, \mathcal{L}, \mathcal{M} \rangle$, and $\mathcal{P}'$ be a Noisy RM Environment $\langle \mathcal{E}, \mathcal{R}, \mathcal{L}, \mathcal{M'} \rangle$. Then $\mathcal{P}$ and $\mathcal{P'}$ are equivalent. 
\end{restatable}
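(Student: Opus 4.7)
The plan is to deduce this as a direct corollary of Theorem~\ref{theorem1} by observing that the POMDP constructed there---with state space $S \times U$, observation space $O$, dynamics derived from $P$ and $\delta_u$, and rewards from $\delta_r$---is built entirely from $\mathcal{E}$, $\mathcal{R}$, and $\mathcal{L}$, and makes no reference whatsoever to the abstraction model. Consequently, invoking Theorem~\ref{theorem1} on both $\mathcal{P}$ and $\mathcal{P}'$ yields literally the \emph{same} POMDP $\mathcal{Q}$, along with equivalence bijections $\phi : \Pi_\mathcal{P} \to \Pi_\mathcal{Q}$ and $\phi' : \Pi_{\mathcal{P}'} \to \Pi_\mathcal{Q}$. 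The composition $(\phi')^{-1} \circ \phi$ is then proposed as the desired bijection between $\Pi_\mathcal{P}$ and $\Pi_{\mathcal{P}'}$.

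The remaining work is to verify that the equivalence relation defined in the paper is closed under composition of its witnessing bijections. This reduces to two easy observations: a composition of bijections is a bijection, and each of the two preserved properties---equality of expected discounted return, and identical action distribution given a shared observation-action history $h_t$---is automatically inherited by the composition because it holds for each factor separately. I would also spell out the bijection concretely: a policy $\pi \in \Pi_\mathcal{P}$ is sent to the policy $\pi' \in \Pi_{\mathcal{P}'}$ defined by $\pi'(a_t \mid h_t, z'_{1:t}) := \pi(a_t \mid h_t, \mathcal{M}(h_1), \ldots, \mathcal{M}(h_t))$, with $\pi'$ simply ignoring its $z'_{1:t}$ input and recomputing the original abstraction from the underlying observation-action history. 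This is well-defined because both $z_{1:t}$ and $z'_{1:t}$ are deterministic functions of $h_t$, so conditioning on them alongside $h_t$ adds no information beyond $h_t$ itself; moreover, the distribution over trajectories of $(s_t, u_t, h_t, r_t)$ is entirely unaffected by which abstraction model is being used, so expected returns match.

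There is no real obstacle here---the statement is almost a restatement of the fact that the reformulation in Theorem~\ref{theorem1} is $\mathcal{M}$-free. The only point deserving mild care is confirming that the paper's notion of ``bijection between policies'' is interpreted in a sense compatible with composition (for example, at the level of policies identified when they agree on the reachable portion of their input domain, since values on unreachable $(h_t, z_{1:t})$ pairs affect neither the return nor the on-history behaviour). Once this convention is fixed, the rest is bookkeeping.
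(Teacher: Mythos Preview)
Your proposal is correct and arrives at essentially the same argument as the paper. The paper's proof goes directly to the explicit bijection you spell out in your second paragraph---mapping $\pi$ to $\pi'(a_t\mid h_t,z'_{1:t}):=\pi(a_t\mid h_t,\mathcal{M}(h_1),\ldots,\mathcal{M}(h_t))$, noting that $z_{1:t}$ is a deterministic function of $h_t$---without routing through Theorem~\ref{theorem1}; your factorization through the common POMDP $\mathcal{Q}$ is a mild detour that lands in the same place.
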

%\end{theorem}

% We present the following corollaries with proofs in the Appendix. Corollary~\ref{corollary1} states that the optimal expected discounted return in a Noisy RM Environment 
% %\commenttk{What is the optimal value in a Noisy RM environment? I guess you mean something like $\mathbb{E}_{s_1\sim \mu }V^*(\langle{s_1,u_0\rangle})$. Or perhaps you're talking about every state.} 
% does not depend on whether the agent can observe $\mathcal{L}$, if $\mathcal{E}$ is an MDP (if $\mathcal{E}$ is a POMDP this is not the case). Corollary~\ref{corollary2} states that the choice of $\mathcal{M}$ does not affect the optimal policy.

We also contrast our proposed framework, where the agent does not have direct access to $\mathcal{L}$, with prior RM frameworks where the agent does. We show that this difference does not affect the optimal behaviour in MDP environments, but can affect the optimal behaviour in POMDPs (Theorem~\ref{theorem3}).

%\begin{theorem}
\begin{restatable}[\emph{Does observing $\mathcal{L}$ affect optimal behaviour?}]{theorem}{theoremthreetext}
\label{theorem3}
Let $\mathcal{P}$ be a Noisy RM Environment $\langle \mathcal{E}, \mathcal{R}, \mathcal{L}, \mathcal{M} \rangle$. Consider a problem $\mathcal{P}'$ that is identical to $\mathcal{P}$ except that the agent at time $t$ additionally observes $\mathcal{L}(s_t, a_t, s_{t+1})$ after taking action $a_t$ in state $s_t$. If $\mathcal{E}$ is an MDP, then $\mathcal{P}$ and $\mathcal{P'}$ are equivalent. If $\mathcal{E}$ is a POMDP, $\mathcal{P}$ and $\mathcal{P'}$ may be non-equivalent. 
\end{restatable}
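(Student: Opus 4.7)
The plan is to treat the MDP and POMDP halves of the statement in quite different ways, since they have opposite conclusions.

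For the MDP case, the essential observation is that when $\mathcal{E}$ is an MDP the agent fully observes $s_t$ at every step, so after taking action $a_t$ and receiving $s_{t+1}$ the history $h_{t+1}$ deterministically encodes the triple $(s_t, a_t, s_{t+1})$ and hence $\mathcal{L}(s_t, a_t, s_{t+1})$, because $\mathcal{L}$ is a fixed (albeit unknown to the learner at training time) function. I would build an explicit bijection $\Phi$ between policies of $\mathcal{P}$ and policies of $\mathcal{P}'$ as follows: given $\pi'$ for $\mathcal{P}'$, define $\pi = \Phi^{-1}(\pi')$ by internally recomputing the label sequence $\sigma_{1:t-1}$ from the observed state-action triples and then invoking $\pi'$ on the resulting augmented history; conversely, $\Phi(\pi)$ is obtained by ignoring the revealed labels. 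Because $\pi$ can reconstruct exactly the information that $\pi'$ conditions on, and because both problems use the same underlying $\mathcal{E}, \mathcal{R}, \mathcal{L}$, the two policies induce identical distributions over state-action-reward trajectories, giving equal expected discounted return. The two policies also behave identically on the same history $h_t$, as required by the paper's notion of equivalence.

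For the POMDP case I would exhibit an explicit counterexample. The construction idea is to take two hidden states $s, s'$ that are observationally indistinguishable (i.e.\ $\omega(s) = \omega(s')$ and both arise with positive probability under $\mu$) but on which $\mathcal{L}$ disagrees for some first action, and to design the RM and transition dynamics so that the optimal continuation depends on which of $s, s'$ was the true initial state. A concrete minimal instance: a POMDP with two equiprobable initial hidden states producing the same observation; a single identifying action whose label, under $\mathcal{L}$, differs between the two; and a subsequent binary choice in which matching the correct branch yields reward $1$ and mismatching yields reward $0$. In $\mathcal{P}'$, the revealed label disambiguates the hidden state and permits return $1$ in expectation; in $\mathcal{P}$, the agent's history is the same distribution in either initial state, so every policy is forced to guess and attains at most $1/2$. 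This shows $\mathcal{P}$ and $\mathcal{P}'$ can fail to be equivalent.

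The main technical obstacle lies in the MDP half, specifically in the bookkeeping of timing and measurability of the bijection: I need to confirm that the label $\mathcal{L}(s_t, a_t, s_{t+1})$ is available to the simulating policy $\pi$ exactly when it is available to $\pi'$ (namely, starting from timestep $t+1$, once $s_{t+1}$ has been observed), and that $\pi'$ never conditions on a label that has not yet been revealed in $\mathcal{P}'$. Once the timing aligns, the trajectory-distribution equality follows by induction on $t$. The POMDP counterexample is conceptually simple but requires careful specification of the observation distribution to ensure the two initial hidden states are genuinely indistinguishable from within $\mathcal{P}$ while being separated by a single label query in $\mathcal{P}'$.
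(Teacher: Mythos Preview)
Your proposal is correct and follows essentially the same approach as the paper: for the MDP case you both exploit that $o_t = s_t$ lets the $\mathcal{P}$-policy deterministically reconstruct $\sigma_{1:t-1}$ from $h_t$ and then invoke $\pi'$, while for the POMDP case you both give a two-hidden-state counterexample with identical observations but distinguishing labels, so that $\mathcal{P}'$ attains return $1$ while $\mathcal{P}$ is limited to $1/2$ by a guessing argument. Your timing/measurability remark about when $\mathcal{L}(s_t,a_t,s_{t+1})$ becomes available is exactly the alignment check needed, and your counterexample's two-step structure (identifying action then binary choice) is a minor variant of the paper's persistent-state version.
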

%\end{theorem}

\pagebreak
\section{Method}
\label{sec:method}

\newcommand{\tuple}[1]{\langle #1 \rangle}
\begin{wrapfigure}{r}{0.48\textwidth}
\vspace{-1.2em}
\begin{algorithmic}
    \State {\bfseries Input:} Abstraction model $\mathcal{M}:H\to Z$, \\
    Inference module $f:Z^+\to \Delta U$. 
    \State {}
    \State Initialize policy $\pi:H\times \Delta U\to \Delta A$.%Train policy $\pi\big(a_t\mid h_t, f(\mathcal{M}(h_1),\mathcal{M}(h_2),\dots,\mathcal{M}(h_t))\big)$ using RL
    \For{each episode}
        \State Observe $o_1$.
        \State $h_1\gets (o_1)$
        \For{each time $t=1,2,...$}
            \State $z_t\gets \mathcal{M}(h_t)$
            \State $\tilde u_t \gets f(z_{1:t})$
            \State Execute action $a_t\sim \pi(\cdot\mid h_t,\tilde u_t)$.
            \State Get reward $r_t$ and observation $o_{t+1}$.
            \State Update $\pi$ using RL.
            \State $h_{t+1}\gets h_{t}+(a_t,o_{t+1})$ 
        \EndFor
    \EndFor
\end{algorithmic}
\captionof{algorithm}{On-policy RL that decouples RM state inference using an abstraction model $\mathcal{M}$ and decision making.}
\label{fig:algorithm}
\vspace{-1em}
\end{wrapfigure}

In this section, we consider how to train policies that do not require the ground-truth labelling function $\mathcal{L}$ to act. Given Theorem~\ref{theorem1}, in principle we can apply any deep RL approach suitable for POMDPs, such as a recurrent policy that conditions on past actions and observations. However, such a learning algorithm may be inefficient as it doesn't exploit the known RM structure at all. Motivated by the observation that the RM state $u_t$ is critical for decision making, we instead propose a class of policies that decouple inference of the RM state $u_t$ (treated as an unknown random variable) and decision making into two separate modules (Algorithm~\ref{fig:algorithm}).

The \emph{inference} module models a belief $\tilde{u}_t \in \Delta U$ of the current RM state $u_t$ over the course of an episode with the help of an abstraction model $\mathcal{M}$. More precisely, the inference module is a function $f: Z^+\to \Delta U$ mapping the history of outputs from the abstraction model to a belief over RM states. The inference objective is to recover the (policy-independent) distribution $\mathrm{Pr}(u_t | h_t)$, marginalized over all possible state trajectories $\tau_t = (s_1, a_1, \ldots, s_{t-1}, a_{t-1}, s_t)$:
\[
\mathrm{Pr}(u_t | h_t) = \int {\mathrm{Pr}(u_t | \tau_t) p(\tau_t | h_t) d \tau_t}
\] 
%\commenttk{Is $p$ meant to also be $\Pr$?}
%\commental{For continuous envs I wanted $p$ to denote a p.d.f. rather than probability mass. Not sure if this is the best solution, though.}
Here $\Pr(u_t|\tau_t)$ is deterministic---it is the RM state given the trajectory (under the ground-truth labelling function)---while the probability density function $p(\tau_t|h_t)$  depends on the POMDP transition function and observation probabilities.
The \emph{decision-making} module is a policy $\pi(a_t | h_t, \tilde{u}_t)$ that then leverages the inferred belief $\tilde{u}_t$. Below, we describe three inference modules that leverage different forms of abstraction models $\mathcal{M}: H \to Z$ to predict $\tilde{u}_t$.

\subsection{Naive}
Suppose that in lieu of the ground-truth labelling function $\mathcal{L}$, we have a noisy estimator of $\mathcal{L}$ that predicts propositional evaluations based on the observation history. This is captured via an abstraction model of the form $\mathcal{M}: H \to 2^\mathcal{AP}$ that makes discrete (and potentially incorrect) predictions about the propositions $\mathcal{L}(s_{t-1}, a_{t-1}, s_{t})$ that hold at time $t$, given the history $h_t$. Then, we can recursively model a discrete prediction of the RM state $\hat{u}_t \in U$ (which can be seen as a belief over $U$ with full probability mass on $\hat{u}_t$) using outputs of $\mathcal{M}$ in place of $\mathcal{L}$.

\begin{method}[\textit{Naive}]
    Given $\mathcal{M}: H \to 2^\mathcal{AP}$, predict a discrete RM state $\hat{u}_t \in U$ as follows. Set $\hat{u}_1 = u_1$. For $t > 1$, predict $\hat{u}_t = \delta_u(\hat{u}_{t-1}, \mathcal{M}(h_t))$. 
\end{method}

% \commenttk{In spite of having both an abstraction model and an inference module, neither of them actually describes the uncertainty, so it's a bit unclear what is being compared to a threshold and when.}

A weakness of this approach is that it does not represent the uncertainty in its prediction. Furthermore, since RM state predictions are updated recursively, an error when predicting $\hat{u}_t$ will propagate to all future predictions $(\hat{u}_{t+1}, \hat{u}_{t+2}, \ldots)$. 

\begin{example}
    Returning to the running example, suppose the agent uses its belief of locations yielding gold to derive an abstraction model $\mathcal{M}: H \to 2^\mathcal{AP}$. For a history $h_t$, $\mathcal{M}$ takes the current grid position $s_t$ and predicts \gold is true if it believes there is at least a 50\% chance it yields gold when it performs a digging action. We assume the agent can always predict \depot correctly. Observing Figure~\ref{fig:gold_mining}, we see that $\mathcal{M}$ agrees with $\mathcal{L}$ in all cases except at one cell where there is actually iron pyrite (that the agent believes has gold with 0.6 probability). Given a trajectory where the agent mines at this cell, %Thresholding 
    Naive would erroneously assume gold was obtained.
\end{example}

\subsection{Independent Belief Updating (IBU)}
In order to capture the uncertainty in the belief over the RM state $u_t$, one may instead wish to model a probability distribution $\tilde{u}_t \in \Delta U$. Given an abstraction model of the form $\mathcal{M}: H \to \Delta (2^\mathcal{AP})$ that predicts probabilities over possible propositional evaluations $\mathcal{L}(s_{t-1}, a_{t-1}, s_{t})$, an enticing approach is to derive $\tilde{u}_t$ by probabilistically weighing all possible RM states at time $t-1$ according to the previous belief $\tilde{u}_{t-1}$ along with all possible evaluations of $\mathcal{L}(s_{t-1}, a_{t-1}, s_{t})$
according to $\mathcal{M}$.

\begin{method}[\textit{IBU}]
Given $\mathcal{M}: H \to \Delta (2^\mathcal{AP})$, predict a distribution over RM states $\tilde{u}_t \in \Delta U$ as follows. Set $\tilde{u}_1[u_1] = 1$ and  $\tilde{u}_1[u] = 0$ for $u \in U \setminus \{ u_1 \}$. For $t > 1$, set
\begin{align*}
    \tilde{u}_{t}[u] &= \sum_{\sigma \in 2^\mathcal{AP}, u' \in U} { \mathbbm{1}[\delta_u(u', \sigma) = u] \cdot \tilde{u}_{t-1}[u'] \cdot \mathcal{M}(h_t)[\sigma]}
\end{align*}
\end{method}

On the surface, IBU may appear to solve the error propagation issue of the Naive approach by softening a discrete belief into a probabilistic one. Surprisingly, updating beliefs $\tilde{u}_t$ in this manner can still result in a belief that quickly diverges from reality with increasing $t$. This is because IBU fails to consider that propositional evaluations are linked, rather than independent. Since the computation of $\tilde{u}_t$ aggregates $t$ queries to $\mathcal{M}$, noise in the outputs of $\mathcal{M}$ can dramatically amplify if the \emph{correlation} between these noisy outputs is not considered. This is best illustrated by an example. 

\begin{example}
    \label{ibu_example}
    The mining agent now considers a probability distribution over propositional assignments of $\{\gold, \depot\}$. We assume the agent always perfectly determines \depot and applies its belief of locations yielding gold; e.g., digging in the cell the agent believes has gold with 0.3 probability yields the distribution $(\emptyset: 0.7, \{\gold\}: 0.3, \{\depot\}: 0, \{\gold, \depot\}: 0)$. Consider a trajectory where the agent digs at this cell multiple times. After mining once, IBU updates the RM state belief $\tilde{u}_t$ to reflect a 0.3 probability of having obtained gold. After mining twice, this increases to 0.51 and in general, the belief reflects a $1 - 0.7^k$ probability after mining $k$ times. In reality, mining more than once at this square should not increase the probability beyond $0.3$ since all evaluations of \gold at that cell are linked---they are all true, or they are all false. 
\end{example}

\subsection{Temporal Dependency Modelling (TDM)}

Example~\ref{ibu_example} demonstrates a challenge when aggregating multiple predictions from a noisy classifier---the noise may be correlated between evaluations. One solution in the Gold Mining Problem is to update the RM state belief only the first time the agent digs at any particular square, accounting for the fact that all evaluations of \gold in that state yield the same result. Indeed, this type of solution is used by many approaches in tabular MDPs with noisy propositional evaluations \citep{ghasemi2020task, verginis2022joint}, but this solution does not scale to infinite state spaces where propositional evaluations may be arbitrarily linked between ``similar'' pairs of states. 

We instead consider an inference module that uses an abstraction model $\mathcal{M}: H \to \Delta U$  designed to directly predict a distribution over RM states given the history. Such an abstraction model might manifest as a meta-classifier that aggregates outputs from another model but corrects for the correlation in these outputs. Another way to obtain $\mathcal{M}$ is to train a recurrent neural network \cite{hochreiter1997long} end-to-end given a dataset of histories $h_t$ and their associated (ground-truth) RM states $u_t$. Given an abstraction model $\mathcal{M}: H \to \Delta U$, TDM simply returns the output of $\mathcal{M}$.

% form of abstraction model $\mathcal{M}: H \to \Delta U$ that generalizes this solution by reasoning over states of the RM given the history. 
% %\commenttk{Are the abstraction model and ``inference module'' the same thing in this case?}
% Such an $\mathcal{M}$ might internally model relevant correlations (such as evaluations of \gold) over the course of a trajectory while correcting for them when predicting the RM state belief. Such a model needs to consider propositions only insofar to predict a belief over RM states, and may forgo predicting propositional evaluations altogether. In this case, the abstraction model acts as the entire RM state inference algorithm and can be practically implemented using any recurrent neural network architecture \cite{hochreiter1997long}.

% and we hypothesize that such an abstraction model may be necessary to robustly predict the RM state belief over time. \commenttk{Not sure about that hypothesis. I think some of the work could be offloaded to a separate inference module.}

\begin{method}[\textit{TDM}]
    Given an abstraction model of the form $\mathcal{M}: H \to \Delta U$ predict $\mathcal{M}(h_t)$ directly.  
\end{method}

\subsection{Comparison of Inference Modules}
\label{sec:algtheory}
\begin{table*}[t]
\caption{Comparison of inference modules. For each, we highlight its prerequisite abstraction model, the target feature the abstraction model aims to predict, and its \emph{consistency} in MDPs and POMDPs.}
\label{table:inference-table}
\begin{center}
\begin{small}
\begin{sc}
\begin{tabular}{lcccc}
\toprule
Inference & Abstraction & Target & Consistent & Consistent\\
Module & Model &  & (MDPs) & (POMDPs)
\\
\midrule
Naive & $H \to 2^\mathcal{AP}$ & $\mathcal{L}(s_{t-1}, a_{t-1}, s_{t})$ & \cmark & \xmark \\
IBU & $H \to \Delta (2^\mathcal{AP})$ & $\mathcal{L}(s_{t-1}, a_{t-1}, s_{t})$ & \cmark & \xmark \\
TDM & $H \to \Delta U$ & $u_t$ & \cmark & \cmark\\
\bottomrule
\end{tabular}
\end{sc}
\end{small}
\end{center}
\vskip -0.1in
\end{table*}

At first glance, it may seem challenging to compare different inference modules since they may operate under different abstraction models. Our goal is to elucidate the relatives advantages of each approach to better inform its use.
% Nonetheless, elucidating the advantages and disadvantages of these algorithms can better inform their use. Furthermore, in a plethora of settings, new abstraction models can be obtained, such as by gathering and training on appropriately labelled datasets, and in this case, ascertaining the relative efficacies of these algorithms is critical.}
%\commenttk{I wonder if this red text is too negative-sounding.}
We begin by considering a theoretical property of inference modules, with our conclusions summarized in Table~\ref{table:inference-table}. We consider how accurately an inference module models its target distribution $\mathrm{Pr}(u_t | h_t)$. This largely depends on the veracity of the abstraction model $\mathcal{M}$, but nonetheless, it is desirable that the inference module precisely recovers $\mathrm{Pr}(u_t | h_t)$ under an ideal abstraction model $\mathcal{M}^*$. If this is possible, then we say that an inference module is \textit{consistent}.
\begin{definition}[\textit{Consistency}]
    Consider an inference module $f: Z^+ \to \Delta U$. $f$ is consistent if there exists some $\mathcal{M^*}: H \to Z$ such that for every history $h_t \in H$, running $f$ on $h_t$ using $\mathcal{M^*}$ as the abstraction model results in the belief $\mathrm{Pr}(u_t|h_t)$.
\end{definition}

In the case that $\mathcal{E}$ is an MDP, Naive, IBU, and TDM are all consistent since there exists an abstraction model that can recover the ground-truth labelling function $\mathcal{L}$ (for Naive and IBU) and $u_t$ (for TDM) with certainty. However, in the general case when $\mathcal{E}$ is a POMDP, only TDM remains consistent. Proofs and counterexamples are provided in Appendix~\ref{sec:additional-proofs}.

% Another important consideration is the applicability of each RM state inference algorithm, which depends on the availability of the prerequisite abstraction model. 
\section{Experiments}
\label{sec:experiments}

%We experimentally 
Our experiments assess the approaches in Section~\ref{sec:method} in terms of RL sample efficiency and final total return, as well as accuracy in predicting a belief over RM states. We examine whether these methods are robust to uncertainty in the outputs of abstraction models, and whether they offer advantages over end-to-end RL algorithms that do not attempt to exploit task structure. Our experimental settings involve challenges that arise when scaling to complex, real-world environments: temporally extended reasoning, partial observability, high-dimensional observations, and sparse rewards.

%\commentsm{Time permitting, please be more precise about what "RL Performance" is. (NOT CRITICAL FINE AS IS)}

% To fairly compare methods with different abstraction models, we train abstraction models to predict their target feature from equivalent data. We also investigate whether our proposed methods that leverage \emph{noisy} abstraction models outperform generic RL algorithms that do not exploit task structure. Our experimental settings consider challenges that may arise when scaling to complex, real-world environments---we target domains requiring temporally extended behaviours under partial observability, high-dimensional observations, and sparse and non-Markovian rewards.

% We conduct experiments to investigate the following questions in the context of Noisy RM Environments: \textbf{(1)} Can the proposed policies from Section~\ref{sec:method} improve RL performance and sample efficiency over methods that do not exploit task structure? \textbf{(2)} Are these methods robust to noisy abstraction models? \textbf{(3)} How well do these RM state inference algorithms model a belief over RM states? 

\subsection{Environments}
\begin{figure}[t]
    \begin{subfigure}[t]{0.49\textwidth}
        \vspace{-12.5em}
        \centering
        \begin{minipage}{\textwidth}
            \centering
            \scalebox{0.85}{
            \input{figs/traffic}
            }
            % \\
            % (a)
        \end{minipage}
        \\
        \vspace{1.5mm}
        \begin{tikzpicture}
        \draw [dashed, line width=0.04cm] (-3,0.) -- (4,0);
        \end{tikzpicture}
        \\
        \begin{minipage}{\textwidth}
            \centering
            \scalebox{0.85}{
            \input{figs/kitchen}
            }
        \end{minipage}
    \end{subfigure}
    \hspace{1em}
    \begin{subfigure}[t]{0.49\textwidth}
        \centering
        \includegraphics[width=0.75\textwidth]{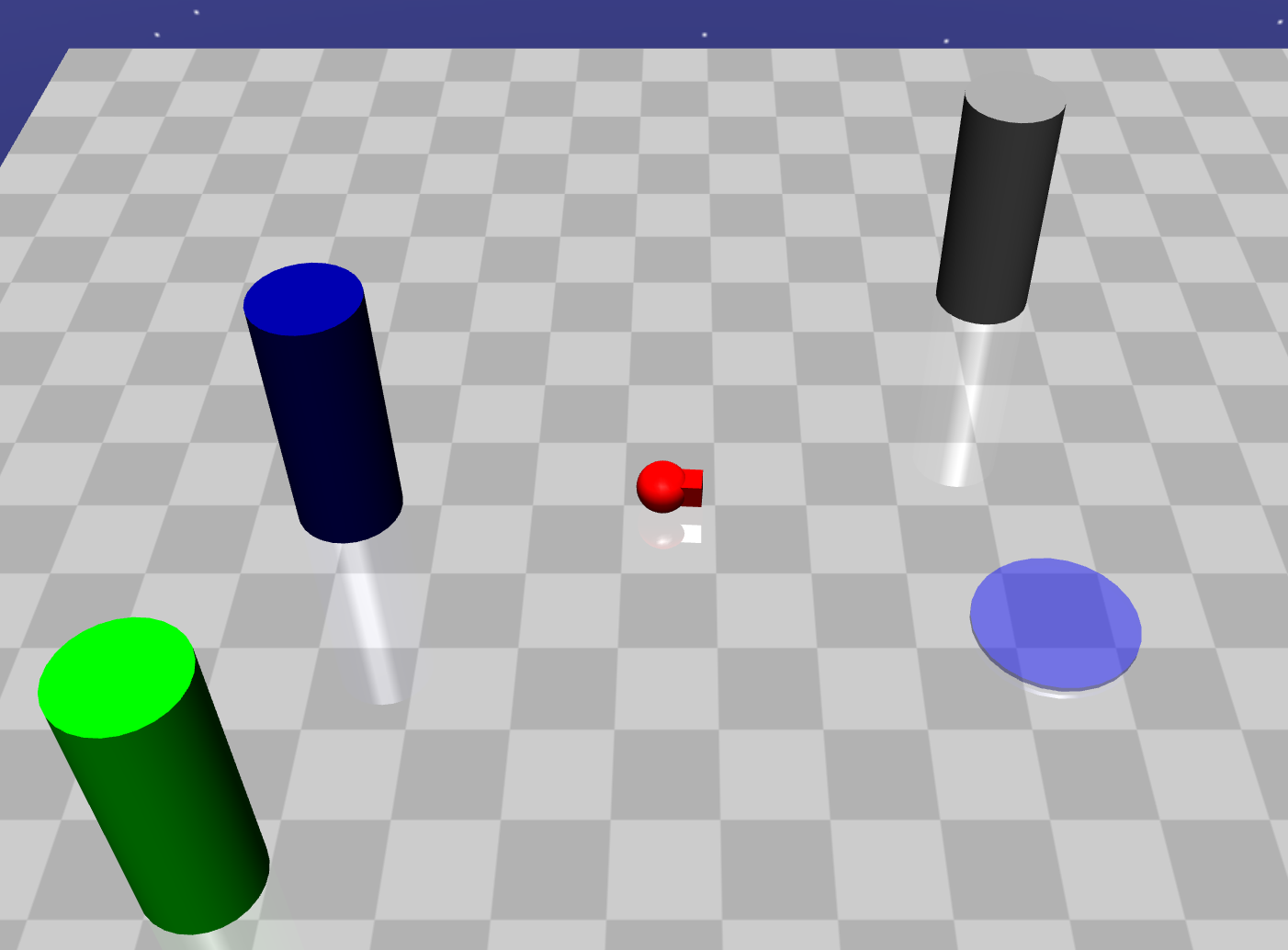}
    \end{subfigure}
    
    \caption{Traffic Light (top left) and Kitchen (bottom left),
    are MiniGrids with image observations, where key propositions are partially observable. Colour Matching (right) is a MuJoCo robotics environment where the agent must identify colour names by their RGB values to solve a task.} 
    \vspace{-1em}
    \label{fig:envs}%
\end{figure}

Our environments include the \textit{Gold Mining} Problem as a toy environment, along with two MiniGrid \citep{MinigridMiniworld23} environments with image observations and a MuJoCo robotics environment (Figure~\ref{fig:envs}). Full details on the environments are provided in Appendix~\ref{sec:exp_details}.

\textit{Traffic Light} is a partially observable MiniGrid where the agent must drive along a road to pick up a package and then return home. A traffic light along the road cycles between green, yellow, and red at stochastic intervals and the agent receives a delayed penalty if it runs the red light. The agent only has a forward-facing camera and it can drive forwards, backwards, wait, or make a U-turn. We encode this task with an RM (Figure~\ref{traffic-rm}) with propositions for running a red light, collecting the package, and returning home. Crucially, the agent does not observe the light colour when entering the intersection {in reverse}, causing the agent to be uncertain about the evaluation of the red light proposition.

\textit{Kitchen} is a partially observable MiniGrid where a cleaning agent starts in the foyer of a home and is tasked with cleaning the kitchen before leaving. There are three chores: the agent must make sure the dishes are washed, the stove is wiped, and the trash is emptied, but not every chore necessarily requires action from the agent (e.g. there might be no dirty dishes to begin with). However, the agent doesn't know how clean the kitchen is until it enters it. For each chore, a proposition represents whether that chore is ``done'' (e.g. the dishes are clean) in the current state of the environment---thus, the propositional evaluations are unknown to the agent until it enters the kitchen. The RM for this task (omitted due to its size) uses the automaton state to encode the subset of chores that are currently done and gives a reward of 1 for leaving the house once the kitchen is clean. 

In \textit{Colour Matching}, the agent controls a robot car while observing LiDAR and RGB observations of nearby objects. The agent observes an instruction with the name of a colour (e.g. ``blue'') and it must touch {only} that pillar and then enter the portal. Propositions for each pillar evaluate whether the agent is touching it. The colours of the pillars (and in the instruction) are randomized from a set of 18 possible colours in each episode, so to reliably solve the task, the agent must learn the correct associations of colour names to their RGB values. 

\subsection{Baselines}
We consider the methods \textit{Naive}, \textit{IBU}, and \textit{TDM} from Section~\ref{sec:method} that use the abstraction models described below. For RL experiments, we baseline against a \textit{Memory-only} method for general POMDPs that does not exploit the RM structure. As an upper bound on performance, we compare against an \textit{Oracle} agent that has access to the ground-truth labelling function. In the toy Gold Mining Problem, policies are trained using Q-learning \citep{sutton2018reinforcement} with linear function approximation \citep{melo2007q}. In all other domains, policies are neural networks trained with PPO \citep{schulman2017proximal}, and the Memory-only policy is Recurrent PPO \cite{hausknecht2015deep}, a popular state-of-the-art deep RL method for general POMDPs. 

\subsection{Abstraction Models}
In the Gold Mining Problem, we consider toy abstraction models based on the probabilities in Figure~\ref{fig:gold_mining} as described in the running examples. 
%We assume the agent always determines the \depot proposition correctly. For Naive, it predicts \gold when the probability of gold in the current cell is at least 0.5 and for IBU it uses the probability of the current cell directly. 
TDM is equivalent to IBU except it only updates the RM state belief when digging for the first time at the current cell.

In all other domains, abstractions models are neural networks trained via supervised learning. We collect training datasets comprising 2K episodes in each domain (equalling 103K interactions in Traffic Light, 397K interactions in Kitchen, and 3.7M interactions in Colour Matching), along with validation and test datasets of 100 episodes each. This data is generated by a random policy and labelled with propositional evaluations from $\mathcal{L}$ and ground-truth RM states. To obtain abstraction models, we train classifiers on their respective target labels and we select optimal hyperparameters according to a grid search.\footnote{For RL training on Traffic Light and Kitchen, we continue to finetune the abstraction models using data collected by the policy--- this is to verify that partial observability is difficult to handle even with virtually unlimited data and no distributional shift.} We note that all abstraction models are trained on equivalent data, ensuring a fair comparison between different inference modules. To verify that abstraction models are indeed noisy, we use the test set to evaluate the precision and recall of a classifier trained to predict propositional evaluations (Figure~\ref{fig:symbol_noise}). We find that key propositions towards decision making, such as running a red light in Traffic Light, or whether the chores are done in Kitchen, are uncertain. 

\subsection{Results: Reinforcement Learning}

\begin{figure}[t]
    \hspace{-2.2em}
    \includegraphics[width=1.06\textwidth]{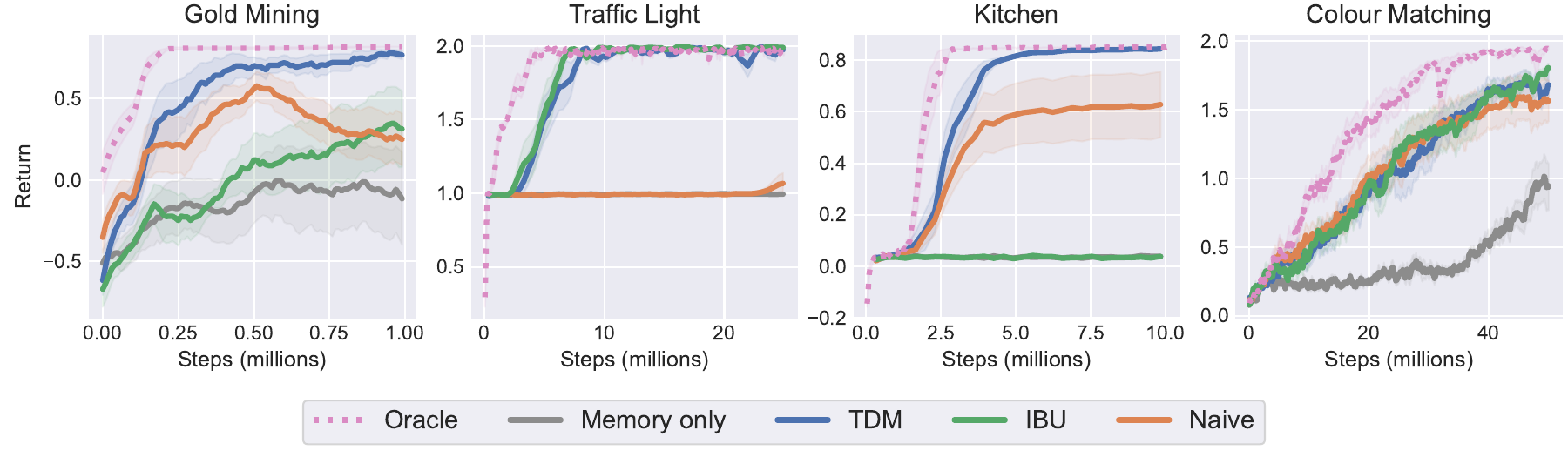}
    \caption{RL curves averaged over 8 runs (shaded regions show standard error). TDM performs well in all domains, in the absence of the ground-truth labelling function, while Recurrent PPO fails.}
    \vspace{-0.5em}
    \label{fig:learning_curves}
\end{figure}

We report RL learning curves for each method and environment in Figure~\ref{fig:learning_curves}. The key results are:

\textbf{(R1) TDM performs well in all domains.} Using only a noisy abstraction model, TDM achieves similar sample efficiency and final performance to the Oracle agent that has access to the ground~truth.
%ground-truth labelling function.

\textbf{(R2): RL makes little progress on any domain when the task structure is not exploited.} Specifically, Memory only (recurrent PPO in the deep RL environments) fails since it does not leverage the temporal and logical structure afforded by the RM. 

\textbf{(R3): The performance of Naive and IBU depends on the specific environment.} When the simplifying assumptions made by Naive or IBU are reasonable, these approaches can perform well. However, the use of these approaches under noisy abstraction models can also lead to dangerous or unintended outcomes (see Appendix~\ref{sec:behaviours} for a discussion).

We now highlight the most notable qualitative behaviours that were observed. For a more in-depth discussion, refer to Appendix~\ref{sec:behaviours}. In Gold Mining, Naive often digs at the nearby cell believed to yield gold with 0.6 probability (but in actuality yielding iron pyrite) before immediately heading to the depot, and IBU repeatedly digs at the same cell to increase its belief of having obtained gold. On the other hand, TDM adopts a robust strategy of mining at multiple different cells to maximize its chance of having obtained gold. In Traffic Light, Naive often runs the red light by driving in reverse to get through the intersection faster. This shortcoming stems from its inability to represent uncertainty about running the red light. In Kitchen, IBU often stays in the foyer without ever entering the kitchen. Despite this, the RM state belief erroneously reflects that all chores have a high probability of being done. This is similar to Example~\ref{ibu_example}---each chore is initially ``clean'' with some small probability, and this is compounded over time by the incorrect independence assumption. In reality, the state of the chore is linked between timesteps (and doesn't change unless the agent intervenes).

\subsection{Results: RM State Belief Inference}
\label{sec:rm_inference}

\begin{figure}[t]
    \includegraphics[width=\textwidth]{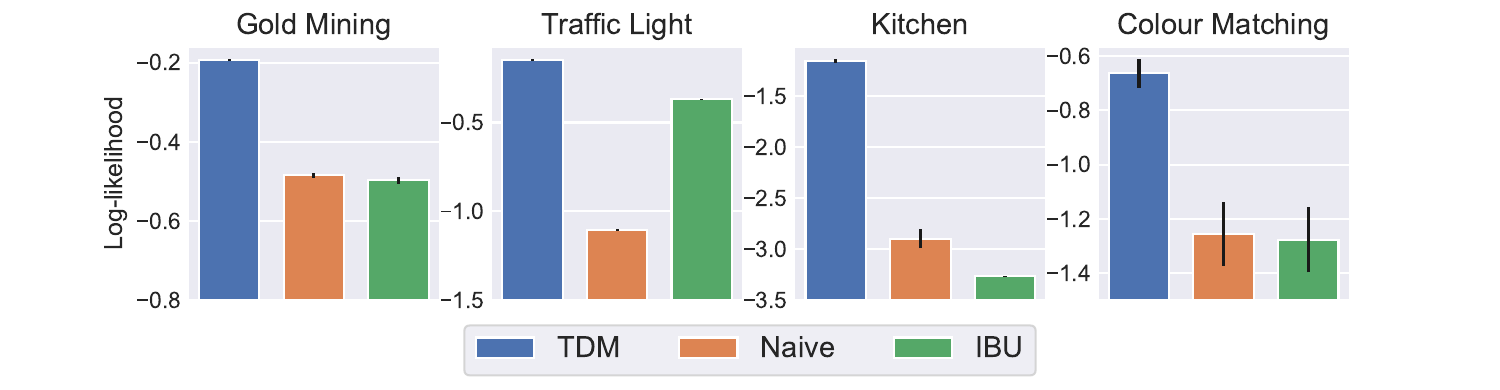}
    \vspace{-1.5em}
    
    \caption{Accuracy of inference modules measured by log-likelihood (higher is better) of the true RM state, averaged over 8 runs with lines showing standard error. TDM predicts the RM state belief more accurately than Naive and IBU.}
    \label{fig:rm_belief_acc}
    \vspace{-1em}
\end{figure}

We compare the inference modules Naive, IBU, and TDM
in terms of their predictive accuracy of the RM state (Figure~\ref{fig:rm_belief_acc}). We evaluate each approach on a test set of 100 fixed trajectories generated by a random policy (following the same distribution as the data used to train the abstraction models). Since each inference module aims to capture a belief, we evaluate them according to the \emph{log-likelihood} of the true RM state under the belief, averaging over the predictions at all timesteps. To avoid $\log{0}$ when Naive makes an incorrect prediction, we lower bound all log-likelihoods at $\ln{0.01}$.

\textbf{(R4): TDM is more accurate when predicting a belief over RM states compared to Naive or IBU on all domains.}

\subsection{Results: Vision-Language Models as Zero-Shot Abstraction Models}

% \commenttk{I guess this hasn't been filled yet.}

As an additional experiment, we assess whether GPT-4o can serve as an effective zero-shot abstraction model in the Traffic Light domain. We render RGB images of the environment and use GPT-4o to evaluate propositions described through text. With this abstraction model, we evaluate Naive and IBU on the test set from Section~\ref{sec:rm_inference}. We baseline these against the inference modules from Section~\ref{sec:rm_inference} that use abstraction models trained via supervised learning, as well as abstraction models with randomly initialized weights. Results and further details are provided in Figure~\ref{fig:gpt4} of the Appendix.

\textbf{(R5): GPT-4o is an effective zero-shot abstraction model for Naive and IBU.} As an abstraction model, GPT-4o is nearly as effective as a custom model trained from ground-truth data, and is significantly more effective than a randomly initialized neural network.

\section{Related Work}

Many recent works leverage structured task specifications such as RMs or Linear Temporal Logic (LTL) in deep RL. 
%\review{However, the vast majority of these works assume access to the ground-truth labelling function when acting, e.g. \cite{vaezipoor2021ltl2action, jothimurugan2021compositional, leon2021nutshell, liu2022skill, furelos2023hierarchies}. We note a few works that learn policies that do not rely on a labelling function for execution.} 
However, the vast majority of these works do not consider the effects of error or uncertainty in the labelling function. We note a few works that explicitly avoid the assumption of an oracle labelling function.
\citet{kuo2020encoding} solve LTL tasks by encoding them with a recurrent policy. \citet{andreas2016sketches} and \citet{oh2017zero} learn modular subpolicies and termination conditions, avoiding the need for a labelling function, but these approaches are restricted to simple sequential tasks. 

Some recent works have considered applying formal languages to deep RL under a noisy evaluation of propositions. \citet{nodari2023assessing} empirically show that current RM algorithms are brittle to noise in the labelling function but do not offer a solution. \citet{tuli2022learning} teach an agent to follow LTL instructions in partially observable text-based games using the Naive method, but only consider propositions that are observable to the agent. \citet{umili2023visual} use RMs in visual environments with a method similar to IBU, while \citet{hatanaka2023reinforcement} update a belief over LTL formulas using IBU, but only update the belief at certain time intervals. While these last three works offer solutions that tackle noise for a specific setting, they do not consider the efficacy of these methods more generally. Hence, we believe our framework and the insights from this paper will provide a useful contribution. 

Despite the large corpus of prior work in fully observable settings, 
RMs and LTL have rarely been considered for general partially observable deep RL problems. We believe this is due to the difficulty of working with an uncertain labelling function when propositions are defined over a partially observed state. The closest work by \citet{tor-etal-neurips19} applies RMs to POMDPs but only considers propositions defined as a function of observations.

% \textbf{\emph{Formal languages with noisy symbols in RL.}} Extending RMs and LTL specifications to environments with a large or infinite state space and an uncertain labelling function remains an important open challenge. \citet{hatanaka2023reinforcement} update a belief over LTL formulas using a method similar to IBU, but only update the belief at certain timesteps. To our knowledge, this is the only prior work on the problem. \commenttk{also visual reward machines} Despite the large corpus of prior work in fully observable settings, RL approaches leveraging RMs and LTL have rarely been applied to partially observable problems in the past. We believe this is due to the difficulty of working with an uncertain labelling function when propositions are defined over a partially observed state. The closest work by \citet{tor-etal-neurips19} applies RMs to POMDPs but only considers propositions that are fully observable to the agent.

% our work is the first proposed deep RL framework using RMs (or similar formal languages) in POMDPS under \emph{state}-based propositions. In this setting, the agent does not have sufficient information to deterministically compute evaluations of the labelling function. The closest work by \citet{tor-etal-neurips19} applies RMs to POMDPs but only considers propositions of the current observation. 

LTL has long been used for specification in motion planning \cite{kress2009temporal}, and some works consider an uncertain labelling function or partial observability. However, solutions nearly always depend on a small, tabular state space, while we focus on solutions that scale to infinite state spaces. \citet{ding2011ltl} propose an approach assuming that propositions in a state occur probabilistically in an i.i.d. manner. \citet{ghasemi2020task}, \citet{verginis2022joint}, \citet{hashimoto2022collaborative}, and \citet{cai2021reinforcement} consider a setting where propositions are initially unknown but can be inferred through %environmental 
interactions.

\label{section:related}

\section{Conclusion}

This work introduces a framework for Reinforcement Learning with Reward Machines where the interpretation or perception of the domain-specific vocabulary is uncertain. We propose a suite of algorithms that allow an RL agent to leverage the structure of the task, as exposed by the Reward Machine, while exploiting prior knowledge through the use of \emph{abstraction models}---preexisting models that noisily ground high-level features into the environment.  Through theory and experiments, we show the pitfalls of naively aggregating outputs from a noisy abstraction model, while simultaneously demonstrating how abstraction models that are aware of temporally correlated predictions can mitigate this issue. Ultimately, our techniques successfully leverage task structure to improve sample efficiency while scaling to environments with large state spaces and partial observability. 

On the topic of societal impact, our work aims to ameliorate the impact of uncertain interpretations of symbols, which can lead to dangerous outcomes. We note that our proposed methods elucidate the internal decision-making process of RL agents within a formal framework, potentially providing enhanced interpretability. 
A limitation of TDM, our best-performing method, is that it relies on a task-specific abstraction model (to predict the state in the specific RM). 
For some real-world tasks, it might not be possible to get enough training data to learn accurate abstraction models (some RM transitions might very rarely be observed), so deploying TDM could lead to poor outcomes. 

Our experiments show the promise of leveraging pretrained foundation models (e.g., GPT-4o) as general-purpose abstraction models in a Reward Machine framework. Two of our proposed methods, Naive and IBU, can employ such models directly in many settings where text or image descriptions of the environment are available. A further investigation on the integration of Reward Machines and foundation models is left to future work. Another promising direction is to relax the assumption of access to ground-truth rewards---rewards given by the RM under the ground-truth evaluation of propositions---during training.

\section*{Acknowledgements}

We gratefully acknowledge funding from the Natural Sciences and
Engineering Research Council of Canada (NSERC), the Canada CIFAR AI Chairs
Program, and Microsoft Research. The second-to-last author also acknowledges funding from the National Center for Artificial Intelligence CENIA FB210017 (Basal ANID) and Fondecyt grant 11230762. Resources used in preparing this research
were provided, in part, by the Province of Ontario, the Government of
Canada through CIFAR, and companies sponsoring the Vector Institute for
Artificial Intelligence (\url{https://vectorinstitute.ai/partnerships/}). Finally, we
thank the Schwartz Reisman Institute for Technology and Society for
providing a rich multi-disciplinary research environment.

An earlier of this work \cite{li2022noisy} appeared in the NeurIPS 2022 Deep RL Workshop.

\bibliography{JAIR.bib}
\bibliographystyle{plainnat}

%%%%%%%%%%%%%%%%%%%%%%%%%%%%%%%%%%%%%%%%%%%%%%%%%%%%%%%%%%%%
\newpage 
\appendix
\section{Additional Definitions and Theorems}
\label{sec:additional-proofs}
\theoremonetext*

% \begin{theorem}
% \label{theorem1full}
% A Noisy RM Environment $\langle \mathcal{E}, \mathcal{R}, \mathcal{L}, \mathcal{M} \rangle$ is equivalent to a POMDP over state space $S \times U$ and observation space $O$.
% \end{theorem}
\begin{proof}
    The POMDP $\mathcal{K} = \langle S', O', A', P', R', \omega', \mu' \rangle$, where 
    $S' = S \times U$, $O' = O$, $A' = A$,
    \begin{align*}
    &P'((s_{t+1}, u_{t+1}) | (s_t, u_t), a_t) = P(s_{t+1} | s_t, a_t) \cdot \mathbbm{1}[\delta_u(u_t, \mathcal{L}(s_t, a_t, s_{t+1})) = u_{t+1}],  \\
    &R'((s_t, u_t), a_t, (s_{t+1}, u_{t+1})) = \delta_r(u_t, \mathcal{L}(s_t, a_t, s_{t+1})), \\  
    &\omega'(o_t | (s_t, u_t)) = \omega(o_t | s_t) \\ %\text{\commenttk{should that be %$\omega(o_t|s_t,a_t)$?}} \\
    &\mu'(s,u) = \mu(s)\mathbbm{1}[u = u_1]
    \end{align*}
    is equivalent to the Noisy RM Environment $\langle \mathcal{E}, \mathcal{R}, \mathcal{L}, \mathcal{M} \rangle$ in the following sense. For any policy $\pi(a_t | h_t, z_{1:t})$ in the noisy RM environment we can obtain an equivalent policy $\pi'(a_t | h_t) = \pi(a_t | h_t, \mathcal{M}(h_1),\ldots,\mathcal{M}(h_t))$ for $\mathcal{K}$ that achieves the same expected discounted return and vice-versa. To see why the reverse direction is true, we note that $z_{1:t} = \mathcal{M}(h_1), \ldots, \mathcal{M}(h_t)$ is a function of $h_t$ for a fixed abstraction model $\mathcal{M}$. These policies behave identically given the same history $h_t$.   
\end{proof}

\medskip 

\theoremtwotext*

% \begin{theorem}[\emph{Does $\mathcal{M}$ affect optimal behaviour?}]
% \label{theorem2full}
% Let $\mathcal{P}$ be a Noisy RM Environment $\langle \mathcal{E}, \mathcal{R}, \mathcal{L}, \mathcal{M} \rangle$, and $\mathcal{P}'$ be a Noisy RM Environment $\langle \mathcal{E}, \mathcal{R}, \mathcal{L}, \mathcal{M'} \rangle$. Then $\mathcal{P}$ and $\mathcal{P'}$ are equivalent. 
% \end{theorem}

\begin{proof}
    Consider any policy $\pi(a_t | h_t, z_{1:t})$ for $\mathcal{P}$ 
    and notice that the sequence of outputs from the abstraction model 
    $\mathcal{M}$ is a function of the history $h_t$, i.e. $z_{1:t} = f(h_t)$. Thus, a valid policy in $\mathcal{P}'$ is $\pi'(a_t | h_t, z'_{1:t}) = \pi_*(a_t | h_t, f(h_t))$ (where $z'_{1:t}$ are the outputs from the abstraction model $\mathcal{M}'$ but do not affect the policy $\pi'$). Similarly, a policy for $\mathcal{P'}$ can be used to obtain a policy for $\mathcal{P}$ that is identical. Thus, $\mathcal{P}$ and $\mathcal{P}'$ are equivalent. 
\end{proof}

\medskip 

\theoremthreetext*

% \begin{theorem}[\emph{Does observing $\mathcal{L}$ affect optimal behaviour?}]
% \label{theorem3full}
% Let $\mathcal{P}$ be a Noisy RM Environment $\langle \mathcal{E}, \mathcal{R}, \mathcal{L}, \mathcal{M} \rangle$. Consider a problem $\mathcal{P}'$ that is identical to $\mathcal{P}$ except that the agent at time $t$ additionally observes $\mathcal{L}(s_t, a_t, s_{t+1})$. If $\mathcal{E}$ is an MDP, then $\mathcal{P}$ and $\mathcal{P'}$ are equivalent. If $\mathcal{E}$ is a POMDP, $\mathcal{P}$ and $\mathcal{P'}$ may be non-equivalent. 
% \end{theorem}
\begin{proof}
    Consider that $\mathcal{E}$ is an MDP. At time $t$, denote the sequence of outputs from the labelling function up to time $t$ by $\sigma_{1:{t-1}}$, where $\sigma_i = \mathcal{L}(s_i, a_i, s_{i+1}) = \mathcal{L}(o_i, a_i, o_{i+1})$ (we assume $s_i = o_i$ since $\mathcal{E}$ is an MDP).
    Consider a policy $\pi'(a_t | h_t, z_{1:t}, \sigma_{1:t-1})$ for $\mathcal{P}'$ that conditions on all observable information (including the outputs of the labelling function) up to time $t$. However, since $\mathcal{E}$ is fully observable, the labelling function outputs $\sigma_{1:t}$ can be represented as a function of the history $h_t$, i.e. $\sigma_{1:t} = f(h_t)$. Then the policy $\pi(a_t | h_t, z_{1:t}) = \pi'(a_t | h_t, z_{1:t}, f(h_t))$ for $\mathcal{P}$ is identical to $\pi'$. Similarly, given a policy $\pi(a_t | h_t, z_{1:t})$ for $\mathcal{P}$, we can obtain an identical policy $\pi'(a_t | h_t, z_{1:t}, \sigma_{1:t}) = \pi(a_t | h_t, z_{1:t})$ for $\mathcal{P}'$.

    Note that if $\mathcal{E}$ is a POMDP, this is not true, since the outputs of the labelling function $\sigma_{1:{t-1}}$ (which is a function of the history of states and actions) cannot be determined from the history of observations and actions. For an explicit counterexample, consider that $\mathcal{E}$ is a two-state POMDP with states $s^{(0)}$ and $s^{(1)}$ where the initial state is equally like to be $s^{(0)}$ or $s^{(1)}$, and that state persists for the remainder of the episode regardless of the agent's actions. The agent receives a single, uninformative observation $o$ regardless of the state. Let there be two actions $a^{(0)}$ and $a^{(1)}$, where a reward of 1 is received for taking action $a^{(i)}$ in state $s^{(i)}$ (a reward of 0 is received otherwise). Let there be a single proposition $A$ that holds for a transition $(s_t, a_t, s_{t+1})$ if $s_t = s^{(0)}$. For the Noisy RM Environment $\mathcal{P}$ (where the agent cannot observe $\mathcal{L}$), the optimal policy receives no information about the current state and has no better strategy than to randomly guess between $a^{(0)}$ and $a^{(1)}$ at each timestep. In $\mathcal{P}'$ (where the agent can observe $\mathcal{L}$), the agent can deduce the state based on whether $A$ holds or not, enabling a strategy that receives a reward of 1 on each step. There is no identical policy in $\mathcal{P}$ that achieves this, thus the problems are not equivalent.
\end{proof}

\medskip 

% \begin{proposition}

%     Consider the intended grounding model $\mathcal{M}^*(h_t, p) = \mathrm{Pr}(p \in L(s_{t-1}, a_{t-1}, s_t) | h_t)$ that represents the (ground-truth) conditional distribution over possible outputs of the labelling function at time $t$. If $\mathcal{E}$ is fully observable, then $\mathcal{M}^*$ matches the ground-truth labelling function $\mathcal{L}$.
%     \label{labelling_function_mdp}
% \end{proposition}

% \begin{proof}
%     \begin{align*}
%         \mathcal{M}^*(h_t, p) &= \mathrm{Pr}(p \in L(s_{t-1}, a_{t-1}, s_t) | h_t) \\ 
%         &= \mathrm{Pr}(p \in L(s_{t-1}, a_{t-1}, s_t) | s_{t-1}, a_{t-1}, s_t) \\ 
%         &= \mathbbm{1} [ p \in L(s_{t-1}, a_{t-1}, s_t) ]
%     \end{align*}
% \end{proof}

\begin{proposition}
    TDM is consistent.
\end{proposition}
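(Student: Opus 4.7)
Define $\mathcal{M}^{\star}: H \to \Delta U$ by $\mathcal{M}^{\star}(h_t) = \mathrm{Pr}(u_t \mid h_t)$, the posterior distribution over RM states given the history. Since TDM simply returns $\tilde{u}_t = \mathcal{M}(h_t)$ when invoked with abstraction model $\mathcal{M}$, running TDM with $\mathcal{M}^{\star}$ immediately yields $\tilde{u}_t = \mathrm{Pr}(u_t \mid h_t)$ for every $h_t \in H$, which is by definition the optimal RM state belief. Consistency then follows directly from the definition.

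\textbf{The only real work} is confirming that $\mathcal{M}^{\star}$ is well-defined as a mapping from histories to distributions on $U$---in particular, that the posterior $\mathrm{Pr}(u_t \mid h_t)$ does not depend on the policy used to sample the history. I would verify this by expanding
\begin{equation*}
    \mathrm{Pr}(u_t \mid h_t) \;=\; \sum_{s_{1:t}} \mathbbm{1}[g(s_{1:t}, a_{1:t-1}) = u_t] \cdot \mathrm{Pr}(s_{1:t} \mid o_{1:t}, a_{1:t-1}),
\end{equation*}
where $g$ is the deterministic map from an environment-state/action sequence to an RM state, obtained by iteratively applying $\delta_u$ and $\mathcal{L}$ starting from $u_1$. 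Applying Bayes' rule to $\mathrm{Pr}(s_{1:t} \mid o_{1:t}, a_{1:t-1})$ and writing out the joint over $(s_{1:t}, o_{1:t}, a_{1:t-1})$ under a policy $\pi$, the factors $\prod_{i} \pi(a_i \mid h_i)$ appear identically in numerator and denominator and cancel. What remains depends only on $\mu$, $P$, $\omega$, and the observed $h_t$, so $\mathcal{M}^{\star}$ is a well-defined function of $h_t$ alone.

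\textbf{I do not anticipate a hard step,} beyond the minor subtlety that $\mathrm{Pr}(u_t \mid h_t)$ is undefined on histories of probability zero under every realizable state-trajectory distribution; on such histories the consistency condition is vacuous, so one simply sets $\mathcal{M}^{\star}(h_t)$ to an arbitrary element of $\Delta U$ (e.g., the point mass on $u_1$). The proof is in essence a well-definedness check coupled with an appeal to the definition of TDM, which reflects the underlying design intent of the method: rather than predicting propositional evaluations and composing them through $\delta_u$, the abstraction model is permitted to target the belief over $u_t$ directly, so the ideal abstraction model is the Bayesian posterior itself.
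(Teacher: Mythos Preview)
Your proposal is correct and follows exactly the same approach as the paper: both choose the witness $\mathcal{M}^{\star}(h_t) = \mathrm{Pr}(u_t \mid h_t)$ and observe that TDM then outputs the optimal belief by definition. Your additional verification of policy-independence and handling of null histories is more thorough than the paper (which dispatches the proposition in two sentences, having noted policy-independence of $\mathrm{Pr}(u_t \mid h_t)$ earlier in Section~\ref{sec:method}), but the core argument is identical.
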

\begin{proof}
    This is immediate because the belief $\mathrm{Pr}(u_t | h_t)$ is a function of $h_t$. Thus, choose $\mathcal{M^*}(h_t) = \mathrm{Pr}(u_t | h_t)$ and we are done.
\end{proof}

\begin{lemma}
\label{lemma1}
If $\mathcal{E}$ is fully observable, then there exist abstraction models $\mathcal{M}_1: H \to 2^\mathcal{AP}$, $\mathcal{M}_2: H \to \Delta(2^\mathcal{AP})$, and $\mathcal{M}_3: H \to \Delta U$ such that $\mathcal{M}_1(h_t) = \mathcal{L}(s_{t-1}, a_{t-1}, s_t)$, $\mathcal{M}_2(h_t)[\mathcal{L}(s_{t-1}, a_{t-1}, s_t)] = 1$, and $\mathcal{M}_3(h_t)[u_t] = 1$ for all $h_t \in H$. In other words, $\mathcal{M}_1$, $\mathcal{M}_2$ recover the ground-truth labelling function $\mathcal{L}$ and $\mathcal{M}_3$ recovers the RM state at time $t$ with certainty.
\end{lemma}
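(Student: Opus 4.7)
The plan is to construct each of the three grounding models explicitly by exploiting full observability, which gives us $o_t = s_t$ for every $t$. Under this assumption, the history $h_t = (o_1, a_1, \ldots, o_{t-1}, a_{t-1}, o_t)$ coincides with the trajectory $(s_1, a_1, \ldots, s_{t-1}, a_{t-1}, s_t)$, so any function of the underlying states and actions is automatically a function of $h_t$. This is the one fact doing all the work.

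For $\mathcal{M}_1$, I would simply define $\mathcal{M}_1(h_t)$ to read off $s_{t-1}, a_{t-1}, s_t$ from the history and return $\mathcal{L}(s_{t-1}, a_{t-1}, s_t)$; the required identity is then immediate. For $\mathcal{M}_2$, I would take the same construction but output the Dirac distribution: set $\mathcal{M}_2(h_t)[\sigma] = \mathbbm{1}[\sigma = \mathcal{L}(s_{t-1}, a_{t-1}, s_t)]$. This again satisfies the stated condition by construction.

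For $\mathcal{M}_3$, I would use $\mathcal{M}_1$ (or an inlined copy of the same idea) to recover the entire sequence of labels $\sigma_i = \mathcal{L}(s_i, a_i, s_{i+1})$ for $i = 1, \ldots, t-1$ from $h_t$, and then iteratively apply the RM transition function: define $u_1$ as the initial state and $u_{i+1} = \delta_u(u_i, \sigma_i)$ for $i \geq 1$. Setting $\mathcal{M}_3(h_t)[u] = \mathbbm{1}[u = u_t]$ gives the desired Dirac distribution on the true RM state. The correctness of this construction follows from the definition of how RM states evolve under a labelling function, as described in the Background section.

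There is essentially no obstacle here: the full observability assumption trivializes the construction because the only thing that prevents an agent from computing $\mathcal{L}$ and $u_t$ from its observations in general is the gap between $o_t$ and $s_t$, and that gap vanishes in an MDP. The only minor subtlety is the $t = 1$ boundary case for $\mathcal{M}_1$ and $\mathcal{M}_2$, where no prior transition exists; this can be handled by defining the models arbitrarily (e.g.\ returning $\emptyset$ or any fixed distribution) since the condition to be verified only quantifies over histories where the expression $\mathcal{L}(s_{t-1}, a_{t-1}, s_t)$ is meaningful, i.e.\ $t \geq 2$.
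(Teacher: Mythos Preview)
Your proposal is correct and takes essentially the same approach as the paper: both rely on the single observation that full observability makes $s_t = o_t$, so $\mathcal{L}(s_{t-1}, a_{t-1}, s_t)$ and $u_t$ are deterministic functions of $h_t$. The paper's proof is just a two-sentence version of exactly this, while you spell out the explicit constructions (including the $t=1$ boundary case) in more detail than the paper bothers with.
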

\begin{proof}
    Since $\mathcal{E}$ is an MDP, we assume states $s_t$ and observations $o_t$ are interchangeable. The desired statements immediately result from the fact that the labelling function $\mathcal{L}(s_{t-1}, a_{t-1}, s_t)$ and $u_t$ are deterministic functions of the history $h_t$.  
\end{proof}

\begin{proposition}
    If $\mathcal{E}$ is fully observable, then Naive and IBU are consistent.
    \label{mdp_consistent}
\end{proposition}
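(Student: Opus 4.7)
The plan is to use Lemma~\ref{lemma1} to produce candidate abstraction models for each algorithm and then verify by induction that the resulting belief matches the optimal belief $\mathrm{Pr}(u_t \mid h_t)$. First, I would note a structural observation about the MDP case: since $\mathcal{E}$ is fully observable, the history $h_t$ determines the sequence of states and actions $s_1, a_1, \ldots, s_t$, which in turn deterministically fixes the propositional sequence $\sigma_{1:t-1} = \mathcal{L}(s_{i-1}, a_{i-1}, s_i)$ and hence the RM state $u_t$. Thus $\mathrm{Pr}(u_t \mid h_t)$ is a point mass concentrated on this unique $u_t$, and so ``recovering the optimal belief'' reduces to recovering the deterministic value of $u_t$ from $h_t$.

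For Naive, I would take $\mathcal{M} = \mathcal{M}_1$ from Lemma~\ref{lemma1}, so that $\mathcal{M}(h_t) = \mathcal{L}(s_{t-1}, a_{t-1}, s_t)$ for every $h_t \in H$. I would then argue by induction on $t$ that $\hat{u}_t = u_t$: the base case $\hat{u}_1 = u_1$ holds by definition, and in the inductive step, applying the Naive update yields $\hat{u}_t = \delta_u(\hat{u}_{t-1}, \mathcal{M}(h_t)) = \delta_u(u_{t-1}, \mathcal{L}(s_{t-1}, a_{t-1}, s_t)) = u_t$, which is exactly the RM transition rule. The resulting point belief on $\hat{u}_t = u_t$ is the optimal belief, so Naive is consistent.

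For IBU, I would instead take $\mathcal{M} = \mathcal{M}_2$ from Lemma~\ref{lemma1}, so that $\mathcal{M}(h_t)$ places full mass on $\mathcal{L}(s_{t-1}, a_{t-1}, s_t)$. Again by induction on $t$, I would show that $\tilde{u}_t$ is a point mass on $u_t$. The base case follows from $\tilde{u}_1[u_1] = 1$. In the inductive step, both $\tilde{u}_{t-1}$ and $\mathcal{M}(h_t)$ are point masses, so every term in the IBU update
\begin{equation*}
\tilde{u}_t[u] = \sum_{\sigma \in 2^\mathcal{AP},\, u' \in U} \mathbbm{1}[\delta_u(u', \sigma) = u] \cdot \tilde{u}_{t-1}[u'] \cdot \mathcal{M}(h_t)[\sigma]
\end{equation*}
vanishes except for the single pair $(u', \sigma) = (u_{t-1}, \mathcal{L}(s_{t-1}, a_{t-1}, s_t))$, yielding $\tilde{u}_t[u] = \mathbbm{1}[\delta_u(u_{t-1}, \mathcal{L}(s_{t-1}, a_{t-1}, s_t)) = u] = \mathbbm{1}[u = u_t]$, which matches the optimal belief.

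There is no serious obstacle here; the proof is essentially two short inductions riding on Lemma~\ref{lemma1}. The only mildly subtle step is the opening observation that, under full observability, the optimal belief is degenerate, which lets us reduce ``recovering $\mathrm{Pr}(u_t \mid h_t)$'' to the purely deterministic question of whether the recursive update tracks the true RM transitions.
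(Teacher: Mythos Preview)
Your proof is correct and follows essentially the same approach as the paper: both invoke Lemma~\ref{lemma1} to select $\mathcal{M}_1$ for Naive and $\mathcal{M}_2$ for IBU, then verify by induction that the recursive updates recover $u_t$ exactly. Your opening remark that the optimal belief is a point mass under full observability is a helpful clarification the paper leaves implicit, and your Naive argument is spelled out as an explicit induction whereas the paper simply notes that the update ``precisely mirrors'' the ground-truth RM computation, but these are presentational differences only.
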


\begin{proof}
    Choose the abstraction models $\mathcal{M}_1$, $\mathcal{M}_2$ and $\mathcal{M}_3$ as in Lemma~\ref{lemma1}. Running Naive with $\mathcal{M}_1$ precisely mirrors the procedure for computing RM states $u_t$ with the ground-truth labelling function $\mathcal{L}$, and therefore recovers $u_t$ with certainty.

    For IBU with $\mathcal{M}_2$, we perform a simple proof by induction. At any time $t$, we propose that the predicted belief $\tilde{u}_t$ assigns full probability to the ground-truth RM state $u_t$. At time $1$, this is true by the initialization of the algorithm. At time $t > 1$, we have

    \begin{align*}
       \tilde{u}_{t}(u) &= \sum_{\sigma \in 2^\mathcal{AP}, u' \in U} {\mathbbm{1}[\delta_u(u', \sigma) = u] \cdot \tilde{u}_{t-1}(u')} \cdot \mathcal{M}(h_t)[\sigma] \\ 
            &= \sum_{\sigma \in 2^\mathcal{AP}} { \mathbbm{1}[\delta_u(u_{t-1}, \sigma) = u] \cdot \mathcal{M}(h_t)[\sigma]} \\ 
            &= \mathbbm{1}[\delta_u(u_{t-1}, L(s_{t-1}, a_{t-1}, s_t)) = u] \\
            &= \mathbbm{1}[u_t = u]
    \end{align*}

\end{proof}

\begin{proposition}
    \label{naive-not-consistent}
    Naive is not necessarily consistent in general POMDP environments $\mathcal{E}$.
\end{proposition}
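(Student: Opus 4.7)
The plan is to exhibit a small POMDP counterexample showing that the optimal belief $\mathrm{Pr}(u_t \mid h_t)$ can be a non-degenerate distribution over $U$, whereas Naive, by construction, always outputs a point mass. This structural mismatch rules out consistency regardless of how the abstraction model $\mathcal{M}: H \to 2^{\mathcal{AP}}$ is chosen.

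First I would articulate the obstruction. For every $\mathcal{M}: H \to 2^{\mathcal{AP}}$, Naive produces $\hat{u}_t \in U$ deterministically from $h_t$, so the induced belief is $\mathbbm{1}[u = \hat{u}_t]$, a Dirac mass. Hence, to violate consistency it suffices to construct a problem and a history $h_t$ with $\mathrm{Pr}(u_t \mid h_t)$ supported on two or more RM states.

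Next I would build an explicit minimal POMDP. Take $\mathcal{E}$ with two hidden states $s^{(0)}, s^{(1)}$, uniform initial distribution, a single action $a$ with self-loops, and a single observation $o$ emitted by both states (so $h_t$ is uninformative). Let $\mathcal{AP} = \{A\}$ with $\mathcal{L}(s^{(0)}, a, s^{(0)}) = \emptyset$ and $\mathcal{L}(s^{(1)}, a, s^{(1)}) = \{A\}$. Define an RM with states $\{u^{(0)}, u^{(1)}\}$, initial state $u^{(0)}$, and transitions $\delta_u(u^{(0)}, \emptyset) = u^{(0)}$, $\delta_u(u^{(0)}, \{A\}) = u^{(1)}$ (with $u^{(1)}$ absorbing). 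Consider the history $h_2 = (o, a, o)$, which occurs with probability $1$ regardless of the hidden state. Conditioning on $h_2$, the hidden state remains uniform, so $\mathrm{Pr}(u_2 = u^{(0)} \mid h_2) = \mathrm{Pr}(u_2 = u^{(1)} \mid h_2) = 1/2$.

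Finally, I would observe that for any choice of $\mathcal{M}(h_2) \in 2^{\mathcal{AP}}$, the Naive update gives either $\hat{u}_2 = u^{(0)}$ (if $\mathcal{M}(h_2) = \emptyset$) or $\hat{u}_2 = u^{(1)}$ (if $\mathcal{M}(h_2) = \{A\}$), and in either case the induced belief assigns probability $1$ to a single state, which does not equal $\mathrm{Pr}(u_2 \mid h_2)$. Since this holds for every abstraction model, Naive fails the consistency criterion on this POMDP. The only mildly delicate point is keeping the counterexample tight enough to see it truly exploits partial observability and not some quirk of the RM — ensuring that $s^{(0)}$ and $s^{(1)}$ are observationally indistinguishable yet yield incompatible propositional evaluations is exactly the source of the unavoidable two-point belief.
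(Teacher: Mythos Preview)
Your proposal is correct and follows essentially the same approach as the paper: both construct a two-state POMDP with a single uninformative observation and a proposition whose truth depends on the hidden state, so that the optimal RM-state belief at $t\ge 2$ is the $(1/2,1/2)$ mixture while Naive can only output a Dirac mass. Your write-up is in fact slightly more explicit than the paper's in spelling out the two possible outputs of Naive for each choice of $\mathcal{M}(h_2)$, but the counterexample and the structural obstruction are the same.
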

\begin{proof}
    This result stems from the fact that Naive cannot model an RM state belief of a strictly probabilistic nature.  

    Consider a POMDP with two states, $s^{(0)}$ and $s^{(1)}$, where the initial state is equally likely to be $s^{(0)}$ or $s^{(1)}$, and that state persists for the remainder of the episode regardless of the agent's actions. The agent receives a single, uninformative observation $o$ regardless of the state. Thus, the agent always observes the same sequence of observations $(o,o,\ldots)$ and the sequence of states is equally likely to be either $(s^{(0)}, s^{(0)},\ldots)$ or $(s^{(1)}, s^{(1)},\ldots)$. 
    
    Let there be a single proposition $A$ that holds when the agent is currently in state $s^{(0)}$. Let the RM $\mathcal{R}$ have two states: an initial state $u^{(0)}$ that transitions to the state $u^{(1)}$ when the proposition $A$ holds. The ground-truth RM state belief is given by $\mathrm{Pr}(u_t = u^{(1)} | h_t) = 0.5$ for all timesteps $t \ge 2$. This belief cannot be captured by Naive under any abstraction model, thus, Naive is not consistent. 
\end{proof}

\begin{proposition}
    IBU is not necessarily consistent in general POMDP environments $\mathcal{E}$. 
    \label{ibu-not-consistent}
\end{proposition}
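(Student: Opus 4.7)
The plan is to mimic the structure of Proposition \ref{naive-not-consistent} (the Naive counterexample), but use an RM whose transitions from distinct states require \emph{opposite} interpretations of the same proposition. This forces IBU's per-step independent update to satisfy two incompatible equations simultaneously.

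Concretely, I would reuse the two-state POMDP from the proof of Proposition \ref{naive-not-consistent}: states $s^{(0)}, s^{(1)}$, each chosen with probability $1/2$ at initialization and persisting for the episode, with a single uninformative observation $o$. Let the single proposition $A$ hold iff the current state is $s^{(0)}$. The key change is the RM. I would use three states $u^{(0)}, u^{(1)}, u^{(2)}$ (with $u^{(0)}$ initial) where $u^{(0)}$ transitions to $u^{(1)}$ on $A$ and self-loops on $\lnot A$, while $u^{(1)}$ transitions to $u^{(2)}$ on $\lnot A$ and self-loops on $A$. Because $A$ is persistent along any trajectory, the ground-truth posterior over RM states is $\mathrm{Pr}(u_t = u^{(0)} \mid h_t) = \mathrm{Pr}(u_t = u^{(1)} \mid h_t) = 1/2$ and $\mathrm{Pr}(u_t = u^{(2)} \mid h_t) = 0$ for every $t \geq 2$ and every (observationally identical) history $h_t$.

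Next, I would show that no abstraction model $\mathcal{M}: H \to \Delta(2^{\mathcal{AP}})$ makes IBU recover this posterior. Matching the true belief at $t=2$ forces $\mathcal{M}(h_2)[\{A\}] = 1/2$, which gives $\tilde{u}_2 = (1/2, 1/2, 0)$ as required. Writing $p = \mathcal{M}(h_3)[\{A\}]$ and applying IBU's update rule gives
\begin{align*}
\tilde{u}_3[u^{(0)}] &= \tfrac{1}{2}(1-p), \\
\tilde{u}_3[u^{(1)}] &= \tfrac{1}{2}p + \tfrac{1}{2}p = p, \\
\tilde{u}_3[u^{(2)}] &= \tfrac{1}{2}(1-p).
\end{align*}
Matching $\tilde{u}_3[u^{(0)}] = 1/2$ forces $p = 0$, but matching $\tilde{u}_3[u^{(1)}] = 1/2$ forces $p = 1/2$, a contradiction. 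Hence no $\mathcal{M}$ makes IBU consistent on this instance.

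The main obstacle I anticipate is not computational but conceptual: choosing an RM whose transition structure genuinely exploits IBU's per-timestep independence assumption. The two-state RM used in Proposition \ref{naive-not-consistent} does not suffice, because IBU can trivially ``freeze'' the belief by setting $\mathcal{M}(h_t)[\{A\}] = 0$ for all $t \geq 3$ once the initial marginal $1/2$ is absorbed. The three-state RM above blocks this workaround: any single value of $\mathcal{M}(h_t)[\{A\}]$ must simultaneously control both the $u^{(0)} \to u^{(1)}$ flux and the $u^{(1)} \to u^{(2)}$ flux, which under persistence of $A$ have to be \emph{negatively} correlated across the two starting RM states --- precisely what IBU cannot express. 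This matches the intuition from Example \ref{ibu_example}: IBU's failure mode is an inability to encode that repeated evaluations of a proposition are linked rather than independent.
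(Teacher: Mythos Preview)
Your argument is correct, but it is genuinely different from the paper's. The paper keeps the simple two-state RM (with $u^{(1)}$ absorbing) and instead modifies the \emph{observation model}: each step emits either the uninformative $o$ or a state-revealing $o^{(i)}$, each with probability $1/2$. On the history $o, o^{(1)}$, consistency at the first step forces $\tilde{u}_2 = (1/2,1/2)$, but the revealing observation then makes the optimal belief at the next step $(1,0)$; since $u^{(1)}$ has no outgoing edges, IBU can never reduce $\tilde{u}[u^{(1)}]$ below $1/2$, yielding the contradiction. So the paper's mechanism is ``IBU cannot retract mass in light of later evidence,'' whereas yours is ``IBU cannot satisfy two opposing flux constraints with a single per-step marginal.'' Your construction keeps the POMDP maximally simple (no informative observations at all) at the cost of a slightly larger RM, and it ties the failure more explicitly to the linked-evaluations phenomenon of Example~\ref{ibu_example}; the paper's construction isolates the distinct failure mode that IBU performs only forward filtering and cannot smooth backward. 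Both are clean one-paragraph counterexamples.
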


\begin{proof}
    Suppose for a contradiction that IBU is consistent and therefore, produces $\tilde{u}_t = \mathrm{Pr}(u_t | h_t)$ for some $\mathcal{M^*}$.

    Consider the same POMDP as in Proposition~\ref{naive-not-consistent}, with one small change. Now, while in any state $s^{(i)}$ the observation emitted is $o$ with 0.5 probability or $o^{(i)}$ (revealing the state) with 0.5 probability.
    Consider a history $h_t$ with the observation sequence $o, o^{(1)}$. After seeing the observation $o$, it must be the case that $\tilde{u}_2$ assigns 0.5 probability to both $u^{(0)}$ and $u^{(1)}$, as before. Then after seeing the observation $o^{(1)}$, it becomes certain that the persistent state is $s^{(1)}$, and therefore $\mathcal{Pr}(u_t|h_t)$ assigns probability 1 to $u^{(0)}$. However, regardless of how $\mathcal{M^*}(h_t)$ is assigned, it is impossible to achieve this belief for $\tilde{u}_3$. This is since $\tilde{u}_2[u^{(1)}] = 0.5$ but there are no RM transitions out of $u^{(1)}$, so $\tilde{u}_3[u^{(1)}] \ge 0.5$. 
\end{proof}

\section{Experimental Details}
\label{sec:exp_details}

\begin{figure}
\centering
\includegraphics[width=0.7\textwidth]{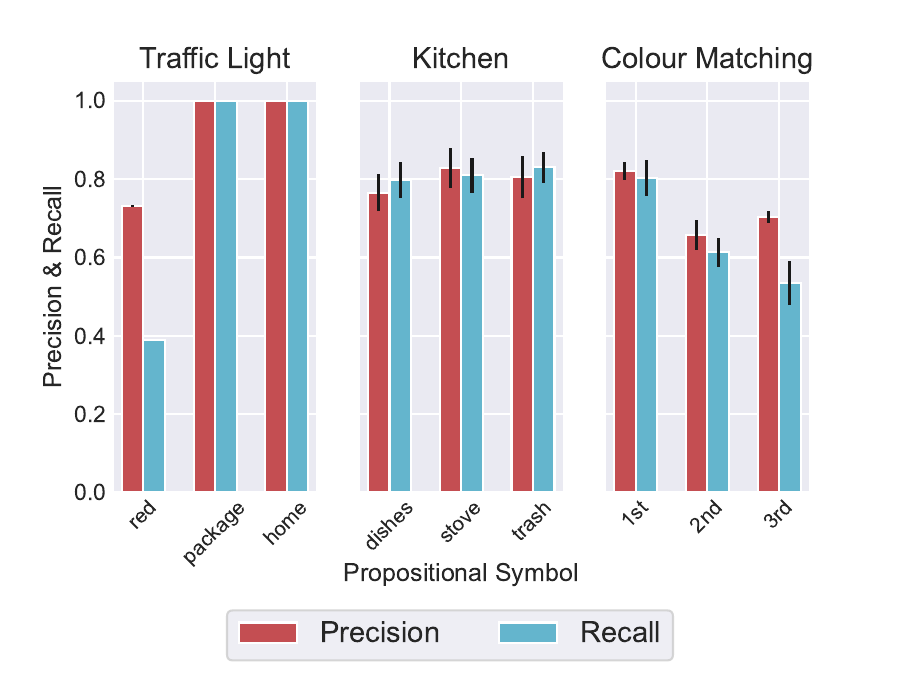}
\caption{
    Precision and recall of a classifier trained to predict occurrences of propositions. Key propositions in each domain are uncertain. Values are averaged over 8 training runs, with lines showing standard error. 
}
\label{fig:symbol_noise}
\end{figure}

\subsection{Gold Mining Experiments}
The Gold Mining Problem is described in Example~\ref{gold_mining_example}. We also introduce a small (Markovian) penalty of 0.02 each time the agent selects a movement action to incentivize the agent towards shorter solutions.

\label{sec:exp_details_gold}
\textbf{Policy models.} The Oracle baseline learns a Q-value for each (\emph{location}, \emph{RM state}, \emph{action}) combination without memory or function approximation. The Memory only baseline and all the proposed methods are conditioned on six additional binary features corresponding to each square with gold or iron pyrite (i.e. where the agent places non-zero probability of \gold) that indicates whether the agent has previously mined at that location. We use this as a tractable alternative to representing the entire history without neural network encoders.

All approaches except Oracle use a linear decomposition of Q-values to allow for generalization and to reduce the number of estimated Q-values. Naive, IBU, and TDM use the following approximation.

\begin{align*}
& Q(\mathrm{\emph{location}}, \tilde{u}, \mathrm{\emph{memory}}_{1:6}, \mathrm{\emph{action}}) \\ 
&= \sum_{u \in U} \Bigl [ {\tilde{u}(u) \cdot \bigl [ Q_1(u) + Q_2(\mathrm{\emph{location}}, u, \mathrm{\emph{action}}) + \frac{1}{6}\sum_{i=1}^{6}{Q_3(\mathrm{\emph{location}}, u, \mathrm{\emph{memory}}_i, \mathrm{\emph{action}}) } \bigr ] } \Bigr ] 
\end{align*}

The Memory only baseline use the following approximation. 
\begin{align*}
Q(\mathrm{\emph{location}}, \mathrm{\emph{memory}}_{1:6}, \mathrm{\emph{action}}) &= Q_1(\mathrm{\emph{location}}, \mathrm{\emph{action}}) + \frac{1}{6}\sum_{i=1}^{6}{Q_2(\mathrm{\emph{location}}, \mathrm{\emph{memory}}_i, \mathrm{\emph{action}}) }
\end{align*}

\textbf{Hyperparameters.} All methods use a learning rate of $0.01$, a discount factor of $0.99$, and a random action probability of $\epsilon = 0.2$.

\textbf{Evaluation details.} All methods are trained for $1$M steps, and the policy is evaluated every $10$K steps without $\epsilon$-greedy actions. Since the evaluation policy and the environment are deterministic, each evaluation step records the return from a single episode. 

\subsection{MiniGrid Experiments}

In the Traffic Light domain, the task is described by the RM in Figure~\ref{traffic-rm}. There are propositions corresponding to reaching the squares with the package, the home, and for crossing a red light. The light colour cycles between green, yellow, and red, where the duration of the green and red lights are randomly sampled. The yellow light always lasts a single timestep and serves to warn the agent of the red light.  

The RM for the Kitchen domain (not shown) has 9 states, encoding all possible subsets of the completed chores plus one additional terminal state that is reached upon leaving the house. The episode is initialized such that each chore needs cleaning with $\frac{2}{3}$ probability, and a chore can be completed by visiting that square. The agent can observe tasks that are completed when inside the kitchen. There is one proposition per chore marking whether that chore is done (regardless of whether the agent can observe this) and a proposition for leaving the house through the foyer. We implement a small cost of $-0.05$ for opening the kitchen door and performing each task, making it so the agent must do so purposefully.  

\begin{figure}[h]
\centering
\begin{tikzpicture}[
    ->, 
    % makes the edges directed
    >=stealth', 
    % makes the arrowheads bold
    node distance=3cm, 
    %specifies the minimum distance between two nodes. Change if necessary.
    every state/.style={thick, fill=gray!10}, 
    % sets the properties for each ’state’ node
    initial text=$ $,
    % % sets the text that appears on the start arrow
    every node/.style={scale=0.85}
    ]
    \node[state, initial] (u0) {$u^{(0)}$}; %u^{(0)}$};
    \node[state, right=2.0cm of u0] (u1) {$u^{(1)}$}; % $u^{(1)}$};
    \node[state, below=2.0cm of u0] (u2) {$u^{(2)}$};
    \node[state, below=2.0cm of u1] (u3) {$u^{(3)}$};
    \node[state, right=2.0cm of u1,accepting] (u4) {$u^{(4)}$};

    \draw (u0) edge[above] node{$\langle \text{package}, 1 \rangle$} (u1);
    \draw (u0) edge[left] node{$\langle \text{red}, 0 \rangle$} (u2);
    \draw (u1) edge[above] node{$\langle \text{home}, 1 \rangle$} (u4);
    \draw (u1) edge[left] node{$\langle \text{red}, 0 \rangle$} (u3);
    \draw (u2) edge[above] node{$\langle \text{package}, 1 \rangle$} (u3);
    \draw (u3) edge[below right] node{$\langle \text{home}, -1 \rangle$} (u4);

%     \draw (u0) edge[loop above] node{$\langle \text{\faArrows*}, -0.02 \rangle$} (u0);
%     \draw (u0) edge[above] node{$\langle \gold, 0 \rangle$} (u1);
    
%     \draw [->,>=stealth,out=-30,in=210,looseness=0.6] (u0.south east) to node[below]{$\langle \depot, 0 \rangle$} (u2.south west);
    
% % \draw[ultra thick,->, >=stealth, out=10, in=170, looseness=0.6] (q_0) to node [above] {\textbf{\small H2 or H3}} (q_3);
    
%     \draw (u1) edge[loop above] node{$\langle \text{\faArrows*}, -0.02 \rangle$} (u1);
%     \draw (u1) edge[above] node{ $\langle \depot,  1 \rangle$} (u2);
    
\end{tikzpicture}
\caption{An RM for Traffic Light. The goal is to pick up the package and return home (each stage gives a reward of 1). If the agent crosses a red light, it gets a delayed penalty upon returning home. To simplify formulas on edge transitions, we assume all propositions occur mutually exclusively, and if no transition condition is satisfied, the RM remains in the same state, receiving 0 reward.}
\label{traffic-rm}
\end{figure}

\textbf{Network architectures:} A policy is composed of an observation encoder followed by a policy network. The observation encoder takes in the current environment observation $o_t$ as well as a belief over RM states (when applicable) and produces an encoding. This encoding is fed to a GRU layer to produce an encoding of the history. The policy network takes the history encoding and feeds it to an actor and critic network, which predicts an action distribution and value estimate, respectively. We use ReLU activations everywhere except within the critic, which use Tanh activations.
The observation encoder consists of two 64-unit hidden layers and produces a 64-dimensional encoding. We use a single GRU layer with a 64-dimensional history encoding. The actor and critic both use two 64-dimensional hidden layers. The abstraction models use a separate, but identically structured observation encoder and GRU layer. The history encoding is fed to a single, linear output layer.

\subsection{Colour Matching Experiment}

The atomic propositions in the Colour Matching domain are $\{a,b,c,d\}$, where $a$ means the agent is currently in range of the target pillar, and $b$ and $c$ mean the agent is in range of the second and third pillars, respectively. The order of the three pillars is specified at the beginning of the episode via an index vector, where each index corresponds to one of the 18 possible colours. 

The RM for this task has 6 states and encodes two pieces of information: whether the target pillar was ever reached, and the last pillar the agent visited.
Each time the agent visits an incorrect pillar, a negative reward of -0.2 is issued, but this penalty is not applied if the agent visits the same pillar multiple times in succession. A reward of 1 is given for visiting the target pillar, and then for subsequently going to the portal. We do not show the RM due to the excessive number of edges.

\textbf{Network architectures.} The policy and grounding model networks are similar to those in the previous experiments. The policy uses an observation encoder with two 256-unit hidden layers and a 128-unit output layer. This is fed to a single GRU layer with a hidden size of 128. The actor has one encoding layer with hidden size 128, followed by linear layers to predict a mean and standard deviation for continuous actions. The critic has two 128-unit hidden layers. 

The abstraction models for %Thresholding
Naive
and IBU are MLPs with five 128-unit hidden layers (the environment is nearly fully observable and encoding the history is not necessary for predicting the labelling function). The abstraction model for TDM first predicts evaluations of propositions at each step, and then aggregates the history of these predictions with history of observations, before predicting the RM state belief. We find this approach to work well with a limited dataset as it can also exploit labels corresponding to the evaluations of individual propositions. The network architecture is as follows. An observation encoder first produces a 128-dimensional embedding using three 128-unit hidden layers. This is fed to a decoder that predicts propositional probabilities using one 128-unit hidden layer. The observation embedding and the outputs of this decoder are both fed through 2 GRU layers with hidden size 128, and finally, another decoder uses the memory from the GRU layers to predict the RM state belief. 

\begin{figure}%
    \centering
    \subfloat{{\includegraphics[width=0.55 \textwidth]{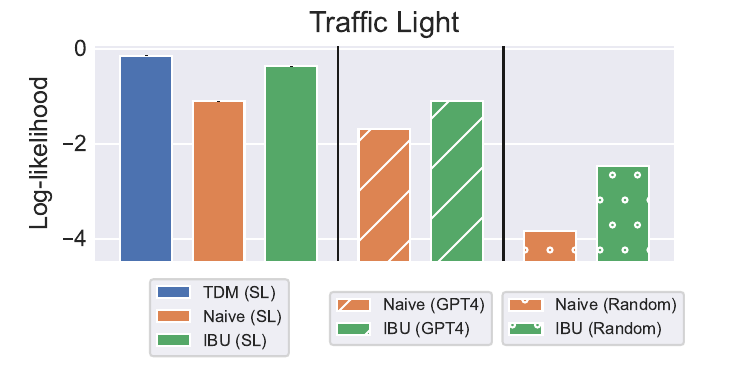} }}%
    \qquad
    \subfloat{{\includegraphics[width=0.26\textwidth]{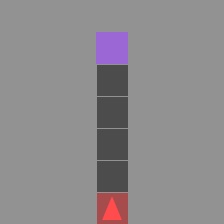} }}%
    \caption{\textbf{(Left:)} Accuracy of various inference modules measured by log-likelihood (higher is better) when predicting RM states given trajectories generated by a random policy. (SL:) abstraction model is trained via supervised learning from ground-truth data. (GPT4:) abstraction model is zero-shot GPT-4o. (Random:) abstraction model is a neural network with random weights. 
    \textbf{(Right:)} An RGB rendering of the Traffic Light MiniGrid environment used to query GPT-4o. GPT-4o is given the prompt: \texttt{``The red triangle is contained in a cell of this grid. What is the color of the cell? Answer in a single word.''} Note that the light colour is only visible when entering the intersection in the forward direction.} %
    \label{fig:gpt4}%
\end{figure}

\begin{table}[t]
\small
\centering
\caption{Hyperparameters for deep RL experiments}
\label{table:hyperparams}

\begin{tabular}{*{4}c}

\cmidrule[\heavyrulewidth]{1-4}
PPO hyperparameters & Traffic Light &
  Kitchen & Colour Matching
   \\ 

\cmidrule[\heavyrulewidth]{1-4}

\multicolumn{1}{l}{Env. steps per update} & 32768 & 32768 & 32000  \\ 
\multicolumn{1}{l}{Number of epochs} & 8 & 8 & 8 \\ 
\multicolumn{1}{l}{Minibatch size} & 2048 & 2048 & 8000 \\ 
\multicolumn{1}{l}{Discount factor ($\gamma$)} &  0.97 &  0.99 & 0.999  \\ 
\multicolumn{1}{l}{Learning rate} & $3\times10^{-4}$ & $3\times10^{-4}$ & $3\times10^{-4}$ \\
\multicolumn{1}{l}{GAE-$\lambda$} & 0.95 & 0.95 & 0.95 \\
\multicolumn{1}{l}{Entropy coefficient} & 0.01 & 0.01 & 0.001 \\
\multicolumn{1}{l}{Value loss coefficient} & 0.5 & 0.5 & 0.5 \\
\multicolumn{1}{l}{Gradient Clipping} & 0.5 & 0.5 & 5 \\
\multicolumn{1}{l}{PPO Clipping ($\varepsilon$)} & 0.2 & 0.2 & 0.2 \\
\multicolumn{1}{l}{LSTM Backpropagation Steps} & 4 & 4 & 4 \\

\cmidrule[\heavyrulewidth]{1-4}
Abstraction model hyperparameters & &
   \\ 

\cmidrule[\heavyrulewidth]{1-4}

\multicolumn{1}{l}{Env. steps per update} & 32768 & 32768 & 32000  \\ 
\multicolumn{1}{l}{Number of epochs} & 8 & 8 & 8 \\ 
\multicolumn{1}{l}{Minibatch size} & 2048 & 2048 & 8000 \\ 
\multicolumn{1}{l}{Learning rate} & $3\times10^{-4}$ & $3\times10^{-4}$ & $3\times10^{-4}$ \\
\multicolumn{1}{l}{LSTM Backpropagation Steps} & 4 & 4 & 4  \\

\bottomrule
\end{tabular}
\vspace{-5mm}
\end{table}

\subsection{PPO Training Details}
\label{sec:hyperparameters}
We report all hyperparameter settings for the Deep RL experiments in Table~\ref{table:hyperparams}. Experiments were conducted on a compute cluster. Each run occupied 1 GPU, 16 CPU workers, and 12G of RAM. Runs lasted approximately 18 hours for Colour Matching, and 5 hours for Traffic Light and Kitchen. The Gold Mining experiments took negligible time and were run on a personal computer. The total runtime for these experiments is estimated at 1100 GPU hours. Prior experiments that did not make it into the paper accounted for at least 1000 GPU hours. We used the implementation of PPO found here under an MIT license: \url{https://github.com/lcswillems/torch-ac}.

\subsection{Agent Behaviours}
\label{sec:behaviours}

Table~\ref{table:behaviours} provides a detailed description and explanation of the behaviours we observed for Naive, IBU, and TDM in our experiments in the Gold Mining, Traffic Light, and Kitchen environments. 

In the Colour Matching environment, Naive, IBU, and TDM all learn an effective policy in the RL setting, but it is important to note that only TDM is effective at accurately modelling a belief over RM states given trajectories from a random policy (see Figure~\ref{fig:rm_belief_acc}). We believe this is since the RL setting provides leeway for the agent to act in a manner that mitigates the errors of the inference module. In particular, the inference modules are most uncertain when the agent stays near the distance threshold for which a pillar is considered ``reached''. However, the RL agent can address this by moving closer to the pillar and reducing this uncertainty.

\begin{table}[t]
\small
\centering
\caption{Behaviours observed in experiments.}
\label{table:behaviours}
\begin{tabular}{p{1cm}p{5cm}p{8cm}}
\cmidrule[\heavyrulewidth]{1-3}
Method & Description of behaviour & Explanation \\
\cmidrule[\heavyrulewidth]{1-3}
\multicolumn{3}{l}{\textbf{Gold Mining}} \\ 
\cmidrule[\heavyrulewidth]{1-3}

Naive &
The agent typically digs at a nearby cell containing iron pyrite and immediately heads to the depot. & 
There is a cell that the agent believes yields gold with 0.6 probability but is actually iron pyrite. Under the Naive approach, digging at this location makes the agent believe with certainty that gold has been obtained so the agent heads to the depot. \\ 
\vspace{-1.5em} \\
IBU & 
The agent repeatedly digs at the same locations within the same episode before heading to the depot. & 
Under IBU, the chance of obtaining gold on each step is assumed to be independent (even when digging at the same square). Hence, the agent repeatedly digs in the same place, even though digging more than once in any location has no utility. \\ 
\vspace{-1.5em} \\
TDM &
The agent mines at several squares to maximize the chance of obtaining gold before heading to the depot. &
Under TDM, we include the assumption that the belief of having obtained gold does not increase on repeated digs at the same square. \\
\vspace{-1.5em} \\

\cmidrule[\heavyrulewidth]{1-3}
\multicolumn{3}{l}{\textbf{Traffic Light}} \\ 
\cmidrule[\heavyrulewidth]{1-3}
Naive &
The agent drives through the intersection in reverse (including when the traffic light is red) to pick up the package.& 
When crossing the intersection in reverse, the agent cannot see the light colour and has to guess. The chance of the light being red tends to be less than 50\%, hence the agent predicts that it has not violated the red light. \\ 
\vspace{-1.5em} \\
IBU & 
The agent faces the intersection and waits until the light is green to proceed, successfully solving the task. 
& 
IBU is able to capture the uncertainty in whether a red light violation has occurred when crossing the intersection in reverse (as further evidenced by Figure~\ref{fig:rm_belief_acc}). Thus, IBU successfully learns to avoid the dangerous behaviour of driving through the intersection in reverse. \\
\vspace{-1.5em} \\
TDM & 
Similar to IBU. & 
Similar to IBU. \\
\vspace{-1.5em} \\
\cmidrule[\heavyrulewidth]{1-3}
\multicolumn{3}{l}{\textbf{Kitchen}} \\ 
\cmidrule[\heavyrulewidth]{1-3}
Naive &
The agent learns the correct behaviour of entering the kitchen and completing the remaining tasks. &
Interestingly, the agent's initial belief over RM states is \textit{not} accurate. While outside the kitchen, the agent cannot tell whether each task was initialized in a ``done'' state or not---it predicts all tasks are \textit{not} done since each starts as ``done'' with only a $\frac{1}{3}$ probability. This incentivizes the agent to enter the kitchen, allowing it to observe which tasks still need to be completed. \\
\vspace{-1.5em} \\
IBU & 
The agent wanders around outside the kitchen and fails to complete the chores. &
While outside the kitchen, the agent predicts on each timestep a $\frac{1}{3}$ probability for each chore that it started in a ``done'' state. As it treats this chance as independent on each timestep, the chance that each chore was done \emph{at some point} up to time $t$ is predicted to be $1 - (\frac{2}{3})^t$. By wandering around outside the kitchen for long enough, the agent believes that all chores are completed with close to probability 1, giving it no incentive to enter the kitchen. \\
\vspace{-1.5em} \\
TDM & 
The agent learns the correct behaviour of entering the kitchen and completing the remaining tasks. &
TDM correctly models the initial belief over RM states reflecting that each chore has an independent $\frac{1}{3}$ chance of being done before the agent has entered the kitchen. This incentivizes the agent to enter the kitchen and complete all chores that may still need to be completed. \\
\vspace{-1.5em} \\
\end{tabular}
\end{table}

% \section{Previous Work on Reward Machines}
% \label{sec:prev-RM-work}

%\input{sections/appendix-pastRMwork}

\clearpage

\newpage
\section*{NeurIPS Paper Checklist}

\begin{enumerate}

\item {\bf Claims}
    \item[] Question: Do the main claims made in the abstract and introduction accurately reflect the paper's contributions and scope?
    \item[] Answer: \answerYes{} % Replace by \answerYes{}, \answerNo{}, or \answerNA{}.
    \item[] Justification: The POMDP framework is described in sections~\ref{sec:framework} and \ref{sec:theory}, the algorithms in section~\ref{sec:method}, the theoretical analysis in sections~\ref{sec:theory} and \ref{sec:algtheory}, and the experiments in section~\ref{sec:experiments}.
    \item[] Guidelines:
    \begin{itemize}
        \item The answer NA means that the abstract and introduction do not include the claims made in the paper.
        \item The abstract and/or introduction should clearly state the claims made, including the contributions made in the paper and important assumptions and limitations. A No or NA answer to this question will not be perceived well by the reviewers. 
        \item The claims made should match theoretical and experimental results, and reflect how much the results can be expected to generalize to other settings. 
        \item It is fine to include aspirational goals as motivation as long as it is clear that these goals are not attained by the paper. 
    \end{itemize}

\item {\bf Limitations}
    \item[] Question: Does the paper discuss the limitations of the work performed by the authors?
    \item[] Answer: \answerYes{} % Replace by \answerYes{}, \answerNo{}, or \answerNA{}.
    \item[] Justification: See the conclusion for some discussion of limitations.
    \item[] Guidelines:
    \begin{itemize}
        \item The answer NA means that the paper has no limitation while the answer No means that the paper has limitations, but those are not discussed in the paper. 
        \item The authors are encouraged to create a separate "Limitations" section in their paper.
        \item The paper should point out any strong assumptions and how robust the results are to violations of these assumptions (e.g., independence assumptions, noiseless settings, model well-specification, asymptotic approximations only holding locally). The authors should reflect on how these assumptions might be violated in practice and what the implications would be.
        \item The authors should reflect on the scope of the claims made, e.g., if the approach was only tested on a few datasets or with a few runs. In general, empirical results often depend on implicit assumptions, which should be articulated.
        \item The authors should reflect on the factors that influence the performance of the approach. For example, a facial recognition algorithm may perform poorly when image resolution is low or images are taken in low lighting. Or a speech-to-text system might not be used reliably to provide closed captions for online lectures because it fails to handle technical jargon.
        \item The authors should discuss the computational efficiency of the proposed algorithms and how they scale with dataset size.
        \item If applicable, the authors should discuss possible limitations of their approach to address problems of privacy and fairness.
        \item While the authors might fear that complete honesty about limitations might be used by reviewers as grounds for rejection, a worse outcome might be that reviewers discover limitations that aren't acknowledged in the paper. The authors should use their best judgment and recognize that individual actions in favor of transparency play an important role in developing norms that preserve the integrity of the community. Reviewers will be specifically instructed to not penalize honesty concerning limitations.
    \end{itemize}

\item {\bf Theory Assumptions and Proofs}
    \item[] Question: For each theoretical result, does the paper provide the full set of assumptions and a complete (and correct) proof?
    \item[] Answer: \answerYes{} % Replace by \answerYes{}, \answerNo{}, or \answerNA{}.
    \item[] Justification: See Appendix~\ref{sec:additional-proofs}.
    \item[] Guidelines:
    \begin{itemize}
        \item The answer NA means that the paper does not include theoretical results. 
        \item All the theorems, formulas, and proofs in the paper should be numbered and cross-referenced.
        \item All assumptions should be clearly stated or referenced in the statement of any theorems.
        \item The proofs can either appear in the main paper or the supplemental material, but if they appear in the supplemental material, the authors are encouraged to provide a short proof sketch to provide intuition. 
        \item Inversely, any informal proof provided in the core of the paper should be complemented by formal proofs provided in appendix or supplemental material.
        \item Theorems and Lemmas that the proof relies upon should be properly referenced. 
    \end{itemize}

    \item {\bf Experimental Result Reproducibility}
    \item[] Question: Does the paper fully disclose all the information needed to reproduce the main experimental results of the paper to the extent that it affects the main claims and/or conclusions of the paper (regardless of whether the code and data are provided or not)?
    \item[] Answer: \answerYes{} % Replace by \answerYes{}, \answerNo{}, or \answerNA{}.
    \item[] Justification: Our algorithms are described in detail in Section~\ref{sec:method}, and their specific implementations and training details are detailed in Appendix~\ref{sec:exp_details}.
    \item[] Guidelines:
    \begin{itemize}
        \item The answer NA means that the paper does not include experiments.
        \item If the paper includes experiments, a No answer to this question will not be perceived well by the reviewers: Making the paper reproducible is important, regardless of whether the code and data are provided or not.
        \item If the contribution is a dataset and/or model, the authors should describe the steps taken to make their results reproducible or verifiable. 
        \item Depending on the contribution, reproducibility can be accomplished in various ways. For example, if the contribution is a novel architecture, describing the architecture fully might suffice, or if the contribution is a specific model and empirical evaluation, it may be necessary to either make it possible for others to replicate the model with the same dataset, or provide access to the model. In general. releasing code and data is often one good way to accomplish this, but reproducibility can also be provided via detailed instructions for how to replicate the results, access to a hosted model (e.g., in the case of a large language model), releasing of a model checkpoint, or other means that are appropriate to the research performed.
        \item While NeurIPS does not require releasing code, the conference does require all submissions to provide some reasonable avenue for reproducibility, which may depend on the nature of the contribution. For example
        \begin{enumerate}
            \item If the contribution is primarily a new algorithm, the paper should make it clear how to reproduce that algorithm.
            \item If the contribution is primarily a new model architecture, the paper should describe the architecture clearly and fully.
            \item If the contribution is a new model (e.g., a large language model), then there should either be a way to access this model for reproducing the results or a way to reproduce the model (e.g., with an open-source dataset or instructions for how to construct the dataset).
            \item We recognize that reproducibility may be tricky in some cases, in which case authors are welcome to describe the particular way they provide for reproducibility. In the case of closed-source models, it may be that access to the model is limited in some way (e.g., to registered users), but it should be possible for other researchers to have some path to reproducing or verifying the results.
        \end{enumerate}
    \end{itemize}

\item {\bf Open access to data and code}
    \item[] Question: Does the paper provide open access to the data and code, with sufficient instructions to faithfully reproduce the main experimental results, as described in supplemental material?
    \item[] Answer: \answerYes{} % Replace by \answerYes{}, \answerNo{}, or \answerNA{}.
    \item[] Justification: Code and instructions are provided as supplementary material. 
    \item[] Guidelines:
    \begin{itemize}
        \item The answer NA means that paper does not include experiments requiring code.
        \item Please see the NeurIPS code and data submission guidelines (\url{https://nips.cc/public/guides/CodeSubmissionPolicy}) for more details.
        \item While we encourage the release of code and data, we understand that this might not be possible, so “No” is an acceptable answer. Papers cannot be rejected simply for not including code, unless this is central to the contribution (e.g., for a new open-source benchmark).
        \item The instructions should contain the exact command and environment needed to run to reproduce the results. See the NeurIPS code and data submission guidelines (\url{https://nips.cc/public/guides/CodeSubmissionPolicy}) for more details.
        \item The authors should provide instructions on data access and preparation, including how to access the raw data, preprocessed data, intermediate data, and generated data, etc.
        \item The authors should provide scripts to reproduce all experimental results for the new proposed method and baselines. If only a subset of experiments are reproducible, they should state which ones are omitted from the script and why.
        \item At submission time, to preserve anonymity, the authors should release anonymized versions (if applicable).
        \item Providing as much information as possible in supplemental material (appended to the paper) is recommended, but including URLs to data and code is permitted.
    \end{itemize}

\item {\bf Experimental Setting/Details}
    \item[] Question: Does the paper specify all the training and test details (e.g., data splits, hyperparameters, how they were chosen, type of optimizer, etc.) necessary to understand the results?
    \item[] Answer: \answerYes{} % Replace by \answerYes{}, \answerNo{}, or \answerNA{}.
    \item[] Justification: See Appendix~\ref{sec:exp_details}.
    \item[] Guidelines:
    \begin{itemize}
        \item The answer NA means that the paper does not include experiments.
        \item The experimental setting should be presented in the core of the paper to a level of detail that is necessary to appreciate the results and make sense of them.
        \item The full details can be provided either with the code, in appendix, or as supplemental material.
    \end{itemize}

\item {\bf Experiment Statistical Significance}
    \item[] Question: Does the paper report error bars suitably and correctly defined or other appropriate information about the statistical significance of the experiments?
    \item[] Answer: \answerYes{} % Replace by \answerYes{}, \answerNo{}, or \answerNA{}.
    \item[] Justification: Standard error is indicated in the figures.
    \item[] Guidelines:
    \begin{itemize}
        \item The answer NA means that the paper does not include experiments.
        \item The authors should answer "Yes" if the results are accompanied by error bars, confidence intervals, or statistical significance tests, at least for the experiments that support the main claims of the paper.
        \item The factors of variability that the error bars are capturing should be clearly stated (for example, train/test split, initialization, random drawing of some parameter, or overall run with given experimental conditions).
        \item The method for calculating the error bars should be explained (closed form formula, call to a library function, bootstrap, etc.)
        \item The assumptions made should be given (e.g., Normally distributed errors).
        \item It should be clear whether the error bar is the standard deviation or the standard error of the mean.
        \item It is OK to report 1-sigma error bars, but one should state it. The authors should preferably report a 2-sigma error bar than state that they have a 96\% CI, if the hypothesis of Normality of errors is not verified.
        \item For asymmetric distributions, the authors should be careful not to show in tables or figures symmetric error bars that would yield results that are out of range (e.g. negative error rates).
        \item If error bars are reported in tables or plots, The authors should explain in the text how they were calculated and reference the corresponding figures or tables in the text.
    \end{itemize}

\item {\bf Experiments Compute Resources}
    \item[] Question: For each experiment, does the paper provide sufficient information on the computer resources (type of compute workers, memory, time of execution) needed to reproduce the experiments?
    \item[] Answer: \answerYes{} % Replace by \answerYes{}, \answerNo{}, or \answerNA{}.
    \item[] Justification: See Section~\ref{sec:hyperparameters}
    \item[] Guidelines:
    \begin{itemize}
        \item The answer NA means that the paper does not include experiments.
        \item The paper should indicate the type of compute workers CPU or GPU, internal cluster, or cloud provider, including relevant memory and storage.
        \item The paper should provide the amount of compute required for each of the individual experimental runs as well as estimate the total compute. 
        \item The paper should disclose whether the full research project required more compute than the experiments reported in the paper (e.g., preliminary or failed experiments that didn't make it into the paper). 
    \end{itemize}
    
\item {\bf Code Of Ethics}
    \item[] Question: Does the research conducted in the paper conform, in every respect, with the NeurIPS Code of Ethics \url{https://neurips.cc/public/EthicsGuidelines}?
    \item[] Answer: \answerYes{} % Replace by \answerYes{}, \answerNo{}, or \answerNA{}.
    \item[] Justification: The potential breaches of the code of ethics do not apply to this work. 
    \item[] Guidelines:
    \begin{itemize}
        \item The answer NA means that the authors have not reviewed the NeurIPS Code of Ethics.
        \item If the authors answer No, they should explain the special circumstances that require a deviation from the Code of Ethics.
        \item The authors should make sure to preserve anonymity (e.g., if there is a special consideration due to laws or regulations in their jurisdiction).
    \end{itemize}

\item {\bf Broader Impacts}
    \item[] Question: Does the paper discuss both potential positive societal impacts and negative societal impacts of the work performed?
    \item[] Answer: \answerYes{} % Replace by \answerYes{}, \answerNo{}, or \answerNA{}.
    \item[] Justification: See the conclusion.
    \item[] Guidelines:
    \begin{itemize}
        \item The answer NA means that there is no societal impact of the work performed.
        \item If the authors answer NA or No, they should explain why their work has no societal impact or why the paper does not address societal impact.
        \item Examples of negative societal impacts include potential malicious or unintended uses (e.g., disinformation, generating fake profiles, surveillance), fairness considerations (e.g., deployment of technologies that could make decisions that unfairly impact specific groups), privacy considerations, and security considerations.
        \item The conference expects that many papers will be foundational research and not tied to particular applications, let alone deployments. However, if there is a direct path to any negative applications, the authors should point it out. For example, it is legitimate to point out that an improvement in the quality of generative models could be used to generate deepfakes for disinformation. On the other hand, it is not needed to point out that a generic algorithm for optimizing neural networks could enable people to train models that generate Deepfakes faster.
        \item The authors should consider possible harms that could arise when the technology is being used as intended and functioning correctly, harms that could arise when the technology is being used as intended but gives incorrect results, and harms following from (intentional or unintentional) misuse of the technology.
        \item If there are negative societal impacts, the authors could also discuss possible mitigation strategies (e.g., gated release of models, providing defenses in addition to attacks, mechanisms for monitoring misuse, mechanisms to monitor how a system learns from feedback over time, improving the efficiency and accessibility of ML).
    \end{itemize}
    
\item {\bf Safeguards}
    \item[] Question: Does the paper describe safeguards that have been put in place for responsible release of data or models that have a high risk for misuse (e.g., pretrained language models, image generators, or scraped datasets)?
    \item[] Answer: \answerNA{} % Replace by \answerYes{}, \answerNo{}, or \answerNA{}.
    \item[] Justification: We do not release data or models.
    \item[] Guidelines:
    \begin{itemize}
        \item The answer NA means that the paper poses no such risks.
        \item Released models that have a high risk for misuse or dual-use should be released with necessary safeguards to allow for controlled use of the model, for example by requiring that users adhere to usage guidelines or restrictions to access the model or implementing safety filters. 
        \item Datasets that have been scraped from the Internet could pose safety risks. The authors should describe how they avoided releasing unsafe images.
        \item We recognize that providing effective safeguards is challenging, and many papers do not require this, but we encourage authors to take this into account and make a best faith effort.
    \end{itemize}

\item {\bf Licenses for existing assets}
    \item[] Question: Are the creators or original owners of assets (e.g., code, data, models), used in the paper, properly credited and are the license and terms of use explicitly mentioned and properly respected?
    \item[] Answer: \answerYes{} % Replace by \answerYes{}, \answerNo{}, or \answerNA{}.
    \item[] Justification: See Section~\ref{sec:hyperparameters}. 
    \item[] Guidelines:
    \begin{itemize}
        \item The answer NA means that the paper does not use existing assets.
        \item The authors should cite the original paper that produced the code package or dataset.
        \item The authors should state which version of the asset is used and, if possible, include a URL.
        \item The name of the license (e.g., CC-BY 4.0) should be included for each asset.
        \item For scraped data from a particular source (e.g., website), the copyright and terms of service of that source should be provided.
        \item If assets are released, the license, copyright information, and terms of use in the package should be provided. For popular datasets, \url{paperswithcode.com/datasets} has curated licenses for some datasets. Their licensing guide can help determine the license of a dataset.
        \item For existing datasets that are re-packaged, both the original license and the license of the derived asset (if it has changed) should be provided.
        \item If this information is not available online, the authors are encouraged to reach out to the asset's creators.
    \end{itemize}

\item {\bf New Assets}
    \item[] Question: Are new assets introduced in the paper well documented and is the documentation provided alongside the assets?
    \item[] Answer: \answerNA{} % Replace by \answerYes{}, \answerNo{}, or \answerNA{}.
    \item[] Justification: No assets are introduced.
    \item[] Guidelines:
    \begin{itemize}
        \item The answer NA means that the paper does not release new assets.
        \item Researchers should communicate the details of the dataset/code/model as part of their submissions via structured templates. This includes details about training, license, limitations, etc. 
        \item The paper should discuss whether and how consent was obtained from people whose asset is used.
        \item At submission time, remember to anonymize your assets (if applicable). You can either create an anonymized URL or include an anonymized zip file.
    \end{itemize}

\item {\bf Crowdsourcing and Research with Human Subjects}
    \item[] Question: For crowdsourcing experiments and research with human subjects, does the paper include the full text of instructions given to participants and screenshots, if applicable, as well as details about compensation (if any)? 
    \item[] Answer: \answerNA{} % Replace by \answerYes{}, \answerNo{}, or \answerNA{}.
    \item[] Justification: There were no such experiments or research.
    \item[] Guidelines:
    \begin{itemize}
        \item The answer NA means that the paper does not involve crowdsourcing nor research with human subjects.
        \item Including this information in the supplemental material is fine, but if the main contribution of the paper involves human subjects, then as much detail as possible should be included in the main paper. 
        \item According to the NeurIPS Code of Ethics, workers involved in data collection, curation, or other labor should be paid at least the minimum wage in the country of the data collector. 
    \end{itemize}

\item {\bf Institutional Review Board (IRB) Approvals or Equivalent for Research with Human Subjects}
    \item[] Question: Does the paper describe potential risks incurred by study participants, whether such risks were disclosed to the subjects, and whether Institutional Review Board (IRB) approvals (or an equivalent approval/review based on the requirements of your country or institution) were obtained?
    \item[] Answer: \answerNA{} % Replace by \answerYes{}, \answerNo{}, or \answerNA{}.
    \item[] Justification: There were no such participants.
    \item[] Guidelines:
    \begin{itemize}
        \item The answer NA means that the paper does not involve crowdsourcing nor research with human subjects.
        \item Depending on the country in which research is conducted, IRB approval (or equivalent) may be required for any human subjects research. If you obtained IRB approval, you should clearly state this in the paper. 
        \item We recognize that the procedures for this may vary significantly between institutions and locations, and we expect authors to adhere to the NeurIPS Code of Ethics and the guidelines for their institution. 
        \item For initial submissions, do not include any information that would break anonymity (if applicable), such as the institution conducting the review.
    \end{itemize}

\end{enumerate}

\end{document}